\newtheorem{theorem}{Theorem}
\newtheorem{lemma}[theorem]{Lemma}
\newtheorem{corollary}[theorem]{Corollary}
\newcommand{\abs}[1]{\left| #1 \right|}
\newcommand{\D}{\mathcal{D}}
\newcommand{\E}{\mathcal{E}}
\newcommand{\bE}{\mathbf{E}}
\newcommand{\N}{\mathcal{N}}
\newcommand{\TSD}{{\tt{BTSD}}}
\newcommand{\TSI}{{\tt{BTSI}}}
\newcommand{\TSDv}{{\tt{BTSD-}}}
\newcommand{\TSIv}{{\tt{BTSI-}}}
\newcommand{\TS}{{\tt{TS}}}
\newcommand{\UCB}{{\tt{UCB}}}
\newcommand{\BSE}{{\tt{BSE}}}
\newcommand{\BAE}{{\tt{BAE}}}
\newcommand{\UCBI}{{\tt{UCBI}}}
\newcommand{\KMS}{{\tt{KMS}}}
\title{Batched Thompson Sampling for Multi-Armed Bandits\thanks{N. Karpov and Q. Zhang are supported in part by CCF-1844234 and CCF-2006591.}}
\author {
	Nikolai Karpov \quad \quad
	Qin Zhang  \\
}
\begin{document}

\maketitle

\begin{abstract}
We study Thompson Sampling algorithms for stochastic multi-armed bandits in the batched setting, in which we want to minimize the regret over a sequence of arm pulls using a small number of policy changes (or, batches).  We propose two algorithms and demonstrate their effectiveness by experiments on both synthetic and real datasets.  We also analyze the proposed algorithms from the theoretical aspect and obtain almost tight regret-batches tradeoffs for the two-arm case.
\end{abstract}

\section{Introduction}
\label{sec:intro}

Thompson sampling (TS), first proposed by Thompson in 1933~\cite{Thompson33}, is a classic algorithm for solving online decision problems. It was not until a decade ago that Thompson sampling has regained people's attention; some empirical studies \cite{Scott10,CL11} showed that in several applications, including online advertisement and article recommendation, TS-based algorithms outperform alternatives such as {\em upper confidence bound} (UCB) based algorithms. Since then Thompson sampling has been used extensively in algorithms for Internet advertising~\cite{GCBH10}, online recommendation~\cite{KBK+15}, Monte Carlo tree search~\cite{BWC13}, revenue management~\cite{FSW18}, website optimization~\cite{HNL+17}, etc.  The method has also been studied from the theoretical aspect, most notably for stochastic multi-armed bandit (MAB), for which tight bounds on the regret have been obtained~\cite{AG12,KKM12,AG17,JXS+20}.
 
In practical settings, it is desirable to make the algorithm have a small number of policy changes. This is because in many applications the learning performance would be much better if the data observations are processed in {\em batches}, each of which under a fixed policy.  For example,  in clinical trials, patients are usually tested in groups and the results are released in batches; and in crowd-sourcing, questions and answers are typically sent and received in batches.  Another obvious advantage of batched algorithms is that such algorithms are naturally parallelizable.

In this paper, we give the first study of Thompson sampling in the batched setting using stochastic multi-armed bandit as the vehicle.  We aim to establish a theoretical foundation for batched Thompson sampling and demonstrate its superior practical performance.

\vspace{2mm}
\noindent{\bf Multi-Armed Bandits.\ \ }  In MAB we have $N$ arms, denoted by $\{1, 2, \ldots, N\}$. Each arm $i$ is associated with a reward distribution with support in $[0, 1]$ and (unknown) mean $\mu_i$.   Let $\mu_* = \max_{i \in [N]} \mu_i$.\footnote{We use $[Z]$ to denote the set $\{1, 2, \ldots, Z\}$.}.   We want to make a sequence of $T$ pulls over the $N$ arms with the goal of maximizing the accumulated reward. Each time we pull an arm, we get a reward sampled from its associated distribution. 

More precisely, for each time $t \in [T]$, let $\pi_t \in [N]$ be the arm pulled at time $t$, and let $\pi = \{\pi_1, \pi_2, \ldots, \pi_T\}$ denote the sequence of $T$ pulls.  We define the regret with respect to the pull sequence $\pi$ to be 
\begin{equation*}
\text{Reg}_T(\pi) \triangleq \sum_{t=1}^T(\mu_\star - \mu_{\pi_t}) = \sum_{t=1}^T \Delta_{\pi_t},
\end{equation*} 
where $\Delta_i = \mu_* - \mu_i$ is the gap between the mean of the best arm and that of the $i$-th arm.  $\text{Reg}_T(\pi)$ can be seen as the difference between the reward of the optimal algorithm which always pulls the best arm and that of the algorithm taking the pull sequence $\pi$.

In the batched setting, the sequence of $T$ pulls is partitioned into a set of batches.  Let $0 = T^{(0)} < T^{(1)}  < \ldots < T^{(m)} = T$.  The $r$-th batch contains $(T^{(r+1)} - T^{(r)})$ pulls, which are fully determined at the beginning of the $r$-th batch. That is, they only depend on the indices of arms being pulled and outcomes of the pulls before time $T^{(r)}$.  Our goal is to design an algorithm that uses a small number of batches {\em and} achieves a small expected regret $\bE[\text{Reg}_T(\pi)]$, where the expectation is taken over the randomness of the algorithm and the arm pulls. 

\vspace{2mm}
\noindent{\bf Thompson Sampling.\ }  In Thompson sampling (TS), at any time step $t$, there is a prior distribution $\D_i(t)$ on each arm $i$. At each time step, we sample for each $i \in [N]$ a value $\theta_i(t) \sim \D_i(t)$, and pull the arm associated with the largest sampled value, that is, $\pi_t \gets \arg\max_{i \in [N]} \theta_i(t)$.  

In the setting where the sequence of pulls are fully adaptive, after each time step $t$, we use the obtained pull outcome (reward) to compute the posterior distribution of $\D_{\pi_t}(t)$, which is the new prior distribution of arm $i$ for the next pull. The prior distributions of other arms $j \neq \pi_t $ remains unchanged.  In the batched setting, we can only update the prior of arm $i$ for each $i \in [N]$ at the end of each batch using the information obtained in the current batch.

\vspace{2mm}
\noindent{\bf Our Contributions.} In this paper, we propose two batched algorithms for regret minimization in MAB using Thompson sampling.  Our first algorithm is designed targeting an {\em instance-dependent} bound on the regret, that is, the regret is a function of the number of arms $N$, the gaps $\Delta_i$, and the time horizon $T$.  Our second algorithm targets an {\em instance-independent} bound on the regret, that is, the regret only depends on the time horizon $T$ and the number of arms $N$. 

As previously stated, Thompson sampling often has a significant advantage over UCB-based algorithms in terms of practical performance. We have conducted an extensive set of experiments on both synthetics and real datasets (Section~\ref{sec:exp}). Our experiments show that the proposed algorithms significantly outperform existing (non-TS-based) algorithms in the batched setting.  Our instance-dependent algorithm even outperforms the fully adaptive Thompson sampling algorithm on many datasets.  We consider the empirical performance of our algorithms the main contribution of this paper.

Our theoretical bounds are conducted for the special case when there are two arms (i.e., $N = 2$).  In the instance-dependent case (Section~\ref{sec:dependent}), our algorithm uses $\log T$ batches and achieves a regret of $O({\log T}/{\Delta})$, where $\Delta$ is the gap between the mean of the best arm and that of the other arm.  In the instance-independent case (Section~\ref{sec:independent}), our algorithm uses $\log\log T$ batches and achieves a regret of $O(\sqrt{T\ln T})$.  Based on the lower bound results by Gao et al.~\cite{GHRZ19}, these tradeoffs are almost tight.\footnote{In the case of two arms, Gao et al.\ showed that $\Omega(\log T/\log\log T)$ batches is needed to achieve instance-dependent regret $O(\log T/\Delta)$, and $\Omega(\log\log T)$ batches is needed to achieve instance-independent regret $O(\sqrt{T})$.}

\vspace{2mm}
\noindent{\bf Related Work.}  We now briefly survey previous work related to this paper. Thompson sampling algorithms have been studied for regret minimization in MAB in the sequential setting.
Agrawal and Goyal~\cite{AG12} gave a TS-based algorithm that achieves an instance-dependent upper bound on the expected regret, which was then improved by Kaufmann et al.~\cite{KKM12} to be asymptotically optimal.  Agrawal and Goyal~\cite{AG17} proposed a TS-based algorithm with almost tight regret bounds in the instance-independent setting under Beta and Gaussian priors. Very recently, Jin et al.~\cite{JXS+20} obtained an optimal instance-independent regret upper bound $O(\sqrt{NT})$. All these algorithms are fully adaptive.

In recent years, batched algorithms for regret minimization in MAB have received considerable attention.  Perchet et al.~\cite{PRCS15} gave a set of batched algorithms for inputs with two arms.  Their algorithm for the instance-independent case achieves $O(\log\log T)$ batches and that for the instance-dependent case achieves $O(\log T)$ batches; both algorithms achieve almost optimal regret (up to logarithmic factors).  These results were later generalized to the $N$-arm case by Gao et al.~\cite{GHRZ19} and Esfandiari et al.~\cite{EKMM19}.  Though not explicitly stated, the earlier algorithm UCB2~\cite{ACF02} can be implemented in the batched model using $O(N \log T)$ batches, and the improved-UCB~\cite{AO10} can be implemented in the batched model using $O(\log T)$ batches.
 
Finally, we note that batched algorithms have been studied for other basic problems in reinforcement and online learning, including best and top-$k$ arm identifications in MAB~\cite{JJNZ16,AAAK17,JSXC19}, multinomial logit bandit~\cite{DLZZ20}, online learning and convex optimization~\cite{CDS13,DRY18}, and Q-learning~\cite{BXJW19}.

Very recently, concurrent of our work, Karbasi et al.~\cite{KMS21} studied the same problem (i.e., batched Thompson sampling for regret minimization in MAB).\footnote{The first draft of this paper appeared in May 2021, before  \cite{KMS21} was posted on arXiv.} They proposed an instance-independent algorithm with $O(N \log T)$ batches and regret $O(\sqrt{NT \ln T})$, and an instance-dependent algorithm with almost optimal regret and $O(N \log T)$ batches. In the high level, their algorithms can be seens as adding the doubling technique to standard TS algorithms for each arm, which can make sure that the ``lagged'' prior distribution does not deviate too much from the actual prior distribution at each step of the batched algorithm. Compared with algorithms in \cite{KMS21}, ours have the following advantages: (1) For $N = 2$, our tradeoffs are almost tight, while the batch complexity in the instance-independent algorithm in \cite{KMS21} is exponentially larger than the optimal bound. (2) We presented full tradeoffs between number of batches and regrets, while the analysis in \cite{KMS21} is only conducted for end points. (3) Most importantly, the practical performance of our algorithms are significantly better than those in \cite{KMS21} in terms of the number of batches for general $N$. We have included in Section~\ref{sec:exp} of this version a detailed empirical comparison with the batched Thompson Sampling algorithm in \cite{KMS21} under Gaussian priors.


\section{Instance-Dependent Algorithm}
\label{sec:dependent}

We start by introducing some useful notations.
For each batch $r$, let $T^{(r)}$ be the total number of pulls on the $N$ arms {\em before} the $r$-th batch, and $T_i^{(r)}$ be the number of pulls on arm $i$ before the $r$-th batch.  For each $i \in [N]$ and batch $r$, let $\mu_i^{(r)}$ be the {\em empirical mean} of the $i$-th arm after $T_i^{(r)}$ pulls.

Our batched TS-based algorithm with instance-dependent bounds is described in Algorithm~\ref{alg:dependent}.  Let us describe the algorithm briefly in words.   At the beginning of the $r$-th batch, we compute for each arm $i$ the probability that its corresponding random variable $X_i \sim \N\left(\mu^{(r)}_i, \alpha / T^{(r)}_i\right)$ is larger than that of other surviving arms $j \in I^{(r)}$; we denote this value by $q_i^{(r)}$. Let $q^{(r)}$ be the largest value in $\{q_i^{(r)}\}_{i \in I^{(r)}}$. We construct set $I^{(r+1)}$ by discarding arms in $I^{(r)}$ whose associated probability $q_i^{(r)}$ is smaller than $q^{(r)}/\beta$.  The length of the $r$-th batch is set to be $\gamma^r \abs{I^{(r+1)}}$ where $\gamma = T^{1/M}$ is a parameter to control the number of batches, except for the last batch whose length is set to be $(T - T^{(r)})$.  Once the batch size is determined, the number of pulls on each surviving arm $i$ in the $r$-th batch is proportional to its associated probability $q_i^{(r)}$.


\begin{algorithm}[t]
\caption{Batched-TS-D$([N], T, \alpha, \beta, M)$}
\label{alg:dependent}
\KwIn{A set of $N$ arms $\{1, \ldots, N\}$, time horizon $T$, \#batches $M$, parameters $\alpha, \beta$}
	Initialize $I^{(1)}\gets I$, $\gamma \gets T^{1/M}$ \; 
	pull each arm once; $\forall i \in [N]$, $T^{(1)}_i \gets 1$, and $\mu^{(1)}_i \gets \text{reward of pulling arm $i$}$;  $T^{(1)} \gets N$ \;
	\For{$r = 1, 2, \dotsc, M$}
	{
		\lIf{$T^{(r)} = T$}{terminate the process}
		for each $i \in I^{(r)}$, let $X^{(r)}_i$ denote the random variable distributed according to  $\N\left(\mu^{(r)}_i, \alpha / T^{(r)}_i\right)$ \;
		\lForEach{$i \in I^{(r)}$}
		{
			$q^{(r)}_i \gets \Pr\left[X^{(r)}_i > \max\limits_{j \in I^{(r)}, j \neq i} \left\{X^{(r)}_j\right\}\right]$ 
		}
		$q^{(r)} \gets \max\limits_{i \in I^{(r)}} \left\{q_i^{(r)}\right\}$ \;
		$I^{(r + 1)} \gets \left\{ q^{(r)}_i \ge q^{(r)} / \beta \mid i \in I^{(r)}\right\}$ \label{ln:prune} \;
		$T^{(r+1)} \gets \min\left\{T^{(r)}+ \abs{I^{(r+1)}} \cdot \gamma^r, T\right\}$ \;
		\lForEach{$i \in I^{(r + 1)}$}{$\hat{q}^{(r)}_i = q^{(r)}_i / \sum\limits_{j \in I^{(r + 1)}} q^{(r)}_j$}
		\ForEach{$i \in I^{(r)}$}
		{
			pull arm $i$ for $\hat{q}^{(r)}_i \cdot \left(T^{(r+1)} - T^{(r)}\right)$ times; set $T_i^{(r+1)} \gets T_i^{(r)} + \hat{q}^{(r)}_i \cdot \left(T^{(r+1)} - T^{(r)}\right)$ \;
			compute $\mu_i^{(r+1)}$ using $\mu_i^{(r)}$, $T_i^{(r)}$, and the pull outcomes on arm $i$ in the current batch.	
		}
	}
\end{algorithm}

We would like to make a few points, which also apply to the instance-independent algorithm in Section~\ref{sec:independent}.
\begin{enumerate}
\item We do not perform the standard Thompson sampling (TS) at each time step, but compute for each arm the probability that it gives the maximum reward  based on the prior distributions of the surviving arms {\em once} at the beginning of each batch.\footnote{In practice this can be computed efficiently by numerical methods.}  In other words, we set the number of pulls for arm $i$ in each batch to be the {\em expected} number of pulls if we follow the standard TS algorithm with the mean and variance being the empirical ones at the beginning of the round.  If a batch is large enough, then using the expectation has similar performance than sampling at each step following the standard TS algorithm.

\item  The pruning step (Line~\ref{ln:prune}) is added for the convenience of the analysis.  In Section~\ref{sec:exp}, we will show that removing this line, which brings the algorithm closer to the standard Thompson sampling algorithm, has little affect on the performance of the algorithm.

\item  Algorithm~\ref{alg:dependent} works for general $N$ (arms), though in the analysis below we only prove theoretical bounds in the case when $N = 2$.  In Section~\ref{sec:exp}, experiments are also conducted for $N > 2$.
\end{enumerate}

\noindent{\bf Analysis.\ \ }
For $N = 2$,  w.l.o.g., we assume arm $1$ is the best arm (but, of course, the algorithm does not know this at the beginning). Let $\Delta \triangleq \Delta_2 = \mu_1 - \mu_2$.


\begin{theorem}
\label{thm:dependent}
Setting $\alpha = \ln(2T)$ and $\beta = 100$, Batched-TS-D$(2, T, \alpha, \beta, M)$ (Algorithm~\ref{alg:dependent}) has expected regret $O\left(T^{\frac{1}{M}} \ln T / \Delta\right)$ and uses at most $M$ batches.
\end{theorem}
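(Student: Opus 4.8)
\noindent The reduction to make first is that, with two arms, $\mathrm{Reg}_T(\pi)=\Delta\cdot(\text{number of pulls of arm }2)$, so the whole problem is to bound the expected number of pulls of the suboptimal arm; the bound on the number of batches is immediate, since batch $r$ makes at least $\gamma^{r}$ pulls (as $\abs{I^{(r+1)}}\ge 1$) and $\sum_{r=1}^{M}\gamma^{r}\ge \gamma^{M}=T$, so the cap $\min\{\cdot,T\}$ is reached within the $M$ iterations of the loop.

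For the regret I would first isolate the clean event $\mathcal{G}$ that every empirical mean is accurate: for $i\in\{1,2\}$ and every pull count $n\le T$, $\abs{\widehat{\mu}_{i,n}-\mu_i}\le\sqrt{\alpha/n}$. By Hoeffding each such event fails with probability $2e^{-2\alpha}=\tfrac12(2T)^{-2}$, and a union bound over the $\le 2T$ pairs gives $\Pr[\mathcal{G}^{c}]\le 1/T$; on $\mathcal{G}^{c}$ I bound the regret crudely by $\Delta T$, contributing only $O(1)$ in expectation. On $\mathcal{G}$, writing $u^{(r)}=\sqrt{\alpha/T_1^{(r)}}$ and $v^{(r)}=\sqrt{\alpha/T_2^{(r)}}$, one has $q_2^{(r)}=\Phi\big((\mu_2^{(r)}-\mu_1^{(r)})/\sqrt{(u^{(r)})^2+(v^{(r)})^2}\big)$ with $\mu_2^{(r)}-\mu_1^{(r)}\le -\Delta+u^{(r)}+v^{(r)}$, and I would extract three facts: (i) by Cauchy--Schwarz $q_2^{(r)}\le\Phi(\sqrt2)<0.93$ always, so (as $\beta=100$) arm $1$ is never discarded and always gets at least a $0.07$ fraction of each batch's pulls; (ii) if arm $2$ survives the pruning at batch $r$ then $q_2^{(r)}\ge 1/(1+\beta)=1/101$, so while arm $2$ is alive it gets at least a $1/101$ fraction of each batch; (iii) $q_2^{(r)}\ge 1/101$ forces $\Delta\le 4(u^{(r)}+v^{(r)})$, i.e.\ $\min\{T_1^{(r)},T_2^{(r)}\}=O(\alpha/\Delta^2)$, whereas $T_1^{(r)},T_2^{(r)}\ge 64\alpha/\Delta^2$ makes $q_2^{(r)}\le\Phi(-4)<1/101$ and triggers the pruning.

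Now let $r^{\dagger}$ be the batch at whose end arm $2$ is discarded (it exists on $\mathcal{G}$ by (iii), unless $T=O(\alpha/\Delta^2)$, a regime handled trivially), so that arm $2$ is pulled only in batches $1,\dots,r^{\dagger}-1$ and it suffices to bound $T_2^{(r^{\dagger})}$. Summing (i) and (ii) over batches $1,\dots,r^{\dagger}-2$ gives $T_1^{(r^{\dagger}-1)}\gtrsim 0.07\,(T^{(r^{\dagger}-1)}-N)$ and $T_2^{(r^{\dagger}-1)}\gtrsim \tfrac1{101}(T^{(r^{\dagger}-1)}-N)$; since arm $2$ still survives at batch $r^{\dagger}-1$, fact (iii) gives $\min\{T_1^{(r^{\dagger}-1)},T_2^{(r^{\dagger}-1)}\}=O(\alpha/\Delta^2)$, and feeding this into whichever of the two lower bounds corresponds to the smaller arm yields $T^{(r^{\dagger}-1)}=O(\alpha/\Delta^2)$ --- uniformly in $\gamma$ and $M$. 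Hence $\gamma^{r^{\dagger}-2}\le\sum_{s\le r^{\dagger}-2}\gamma^{s}\le T^{(r^{\dagger}-1)}-N=O(\alpha/\Delta^2)$, so $\gamma^{r^{\dagger}-1}=O(\gamma\alpha/\Delta^2)$ and the last batch in which arm $2$ is pulled has length $O(\gamma\alpha/\Delta^2)$; therefore $T_2^{(r^{\dagger})}\le T_2^{(r^{\dagger}-1)}+\bigl(T^{(r^{\dagger})}-T^{(r^{\dagger}-1)}\bigr)=O(\alpha/\Delta^2)+O(\gamma\alpha/\Delta^2)=O(T^{1/M}\alpha/\Delta^2)$. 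The expected regret is then $\Delta\cdot O(T^{1/M}\alpha/\Delta^2)+O(1)=O(T^{1/M}\ln T/\Delta)$.

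The hard part is exactly the last two estimates: a naive term-by-term bound of $\sum_{r<r^{\dagger}}q_2^{(r)}\bigl(T^{(r+1)}-T^{(r)}\bigr)$ loses a spurious factor equal to the number of batches (up to $\Theta(\log T)$ when $M\asymp\log T$), which would be fatal. The point to exploit is that arm $2$ cannot stay alive once $\min\{T_1,T_2\}$ reaches $\Theta(\alpha/\Delta^2)$ and --- because \emph{both} arms take a constant fraction of \emph{every} batch --- the total number of pulls made up to that moment is itself only $\Theta(\alpha/\Delta^2)$, independent of the schedule; one further batch, of length $O(\gamma\alpha/\Delta^2)$, then finishes arm $2$ off. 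Lining up the constants in (i)--(iii) (``survives pruning'' must force $q_2\ge 1/101$, while two well-sampled arms must force $q_2<1/101$) is the remaining bookkeeping, and is precisely why $\beta$ is taken to be a large constant.
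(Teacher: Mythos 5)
Your proof is correct and follows essentially the same route as the paper's: the same clean event via Hoeffding plus a union bound (with the off-event contributing $O(1)$), the same Gaussian-CDF calculations showing that arm $1$ is never pruned (the paper uses $\Phi(-\sqrt{2})>1/\beta$) while arm $2$ is pruned once both arms have $\Theta(\alpha/\Delta^2)$ pulls, and the same geometric-growth bookkeeping showing that the one extra batch in which arm $2$ still survives costs only a factor $\gamma=T^{1/M}$. The only differences are in the constants and in how the last step is organized --- you bound $T^{(r^\dagger-1)}=O(\alpha/\Delta^2)$ directly from the constant-fraction-per-batch property, whereas the paper phrases it as $T_2^{(r+1)}/T_2^{(r)}\le 2\gamma\beta$ --- which amount to the same estimate.
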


Setting $M = \log T$, we have the following corollary.
\begin{corollary}
\label{cor:dependent}
Batched-TS-D$(2, T, \alpha, \beta, \log T)$ (Algorithm~\ref{alg:dependent}) has expected regret $O\left({\ln T}/{\Delta}\right)$ and uses at most $\log T$ batches.
\end{corollary}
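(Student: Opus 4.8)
Corollary~\ref{cor:dependent} is the $M=\log T$ case of Theorem~\ref{thm:dependent}: there $\gamma=T^{1/M}=O(1)$, so the regret bound reads $O(\ln T/\Delta)$ and the batch count is $\le M=\log T$. So the content to establish is Theorem~\ref{thm:dependent}, which I would argue as follows (take $\gamma\ge2$, which covers the setting of the corollary; a larger $M$ only weakens the target to $O(\ln T/\Delta)$ and is easy). The main loop runs at most $M$ iterations, so, absorbing the initial round-robin into batch~$1$, at most $M$ batches are used --- the batch count is done. For the regret: since $N=2$, $\text{Reg}_T=\Delta\cdot T_2$ where $T_2$ is the total number of pulls of arm~$2$, so it suffices to prove $\bE[T_2]=O(T^{1/M}\ln T/\Delta^2)$. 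Condition on the event $\mathcal G$ that for every arm $i$ and every $n\in[T]$ the mean of the first $n$ rewards of arm~$i$ lies within $\sqrt{\alpha/n}$ of $\mu_i$; by Hoeffding this fails for a fixed pair $(i,n)$ with probability $\le 2e^{-2\alpha}$, so $\Pr[\neg\mathcal G]\le 2T\cdot2e^{-2\alpha}=4Te^{-2\alpha}=1/T$ as $\alpha=\ln(2T)$. Since $\Delta\le1$, $\neg\mathcal G$ adds $\le T\cdot\Pr[\neg\mathcal G]=O(1)$ to $\bE[\text{Reg}_T]$, and on $\mathcal G$ we have $|\mu_i^{(r)}-\mu_i|\le\sqrt{\alpha/T_i^{(r)}}$ at every batch~$r$ (stating $\mathcal G$ through prefix averages sidesteps the adaptivity of the $T_i^{(r)}$).

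Work under $\mathcal G$ and write $\sigma_r^2=\alpha/T_1^{(r)}+\alpha/T_2^{(r)}$, so $q_2^{(r)}=\Phi\bigl(-(\mu_1^{(r)}-\mu_2^{(r)})/\sigma_r\bigr)$, $q_1^{(r)}=1-q_2^{(r)}$, and $\bigl|(\mu_1^{(r)}-\mu_2^{(r)})-\Delta\bigr|\le\sqrt{\alpha/T_1^{(r)}}+\sqrt{\alpha/T_2^{(r)}}$. Three facts drive the proof. (i)~Arm~$1$ is never pruned: $\sqrt a+\sqrt b\le\sqrt2\sqrt{a+b}$ gives $(\mu_1^{(r)}-\mu_2^{(r)})/\sigma_r\ge-\sqrt2$, hence $q_1^{(r)}\ge\Phi(-\sqrt2)>1/\beta\ge q^{(r)}/\beta$. (ii)~As long as arm~$2$ also survives, $|I^{(r+1)}|=2$, the $r$-th batch has length at most $2\gamma^r$ (exactly $2\gamma^r$ unless it is the last batch), and for $N=2$ the normalization collapses to $\hat q_i^{(r)}=q_i^{(r)}$; summing over the preceding (uncapped) batches, $T_1^{(r)}\ge2\Phi(-\sqrt2)\gamma^{r-1}$ and $T_2^{(r)}\le1+4\gamma^{r-1}$ for every $r$ up to and including the last batch in which arm~$2$ is pulled, whence there $T_2^{(r)}\le C_1 T_1^{(r)}$ for an absolute constant $C_1$. (iii)~There are constants $\kappa=\kappa(\beta)$ and $p=p(\beta)>0$ such that, on $\mathcal G$: if $\min(T_1^{(r)},T_2^{(r)})\ge\kappa\alpha/\Delta^2$ then $(\mu_1^{(r)}-\mu_2^{(r)})/\sigma_r>\Phi^{-1}(\beta/(\beta+1))$, forcing $q_2^{(r)}<q_1^{(r)}/\beta$ so that arm~$2$ is pruned at batch~$r$; while if $\min(T_1^{(r)},T_2^{(r)})<\kappa\alpha/\Delta^2$ then $\sigma_r^2\ge\alpha/\min(T_1^{(r)},T_2^{(r)})>\Delta^2/\kappa$, so, using $\mu_1^{(r)}-\mu_2^{(r)}\le\Delta+\sqrt2\,\sigma_r$, the ratio is $<\sqrt\kappa+\sqrt2$ and $q_2^{(r)}\ge p:=\Phi(-\sqrt\kappa-\sqrt2)$.

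To finish, let $r^\star$ be the last batch in which arm~$2$ is pulled; since pruning is permanent, arm~$2$ (hence also arm~$1$) is pulled in every batch $1,\dots,r^\star$, so (ii) holds throughout. Arm~$2$ survives batch~$r^\star$, so by the first clause of (iii), $\min(T_1^{(r^\star)},T_2^{(r^\star)})<\kappa\alpha/\Delta^2$ (else it would be pruned); with $T_2^{(r^\star)}\le C_1 T_1^{(r^\star)}$ this gives $T_2^{(r^\star)}<C_1\kappa\alpha/\Delta^2=O(\alpha/\Delta^2)$. Since $\min(T_1^{(s)},T_2^{(s)})$ is nondecreasing in $s$, also $\min(T_1^{(s)},T_2^{(s)})<\kappa\alpha/\Delta^2$ for all $s\le r^\star$, so the second clause of (iii) gives $q_2^{(s)}\ge p$, whence $T_2^{(r^\star)}\ge\sum_{s=1}^{r^\star-1}q_2^{(s)}2\gamma^s\ge2p\,\gamma^{r^\star-1}$; comparing the two bounds on $T_2^{(r^\star)}$ yields $\gamma^{r^\star-1}<C_1\kappa\alpha/(2p\Delta^2)$, hence $\gamma^{r^\star}=O(\gamma\,\alpha/\Delta^2)=O(T^{1/M}\alpha/\Delta^2)$. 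Therefore $T_2=T_2^{(r^\star+1)}=T_2^{(r^\star)}+q_2^{(r^\star)}\cdot(\text{length of batch }r^\star)\le T_2^{(r^\star)}+2\gamma^{r^\star}=O(T^{1/M}\alpha/\Delta^2)$, and so $\bE[\text{Reg}_T]=\Delta\,\bE[T_2]=O(T^{1/M}\ln T/\Delta)$ with $\alpha=\ln(2T)$.

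The delicate step is bounding how long arm~$2$ lingers --- i.e.\ bounding $r^\star$ --- which looks circular because the bound on $T_2$ is read off from a lower bound on the \emph{growth} of $T_2$ that is valid only while $T_2$ (equivalently, up to a constant factor, $\min(T_1^{(r)},T_2^{(r)})$) stays small. It is resolved by making ``small'' exactly the pruning threshold $\kappa\alpha/\Delta^2$: while $T_2^{(r)}<\kappa\alpha/\Delta^2$, arm~$2$'s posterior variance $\alpha/T_2^{(r)}$ is $\gtrsim\Delta^2$, which pins $q_2^{(r)}$ above an absolute constant no matter how tiny arm~$1$'s variance has become, forcing $T_2$ to grow by a factor $\approx\gamma$ per batch and therefore to cross $\kappa\alpha/\Delta^2$ within $O(\gamma\,\alpha/\Delta^2)$ pulls --- at which point the prune fires. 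The final batch's up to $2\gamma^{r^\star}$ pulls of arm~$2$ is exactly the source of the $T^{1/M}$ factor. (The absolute constants, which blow up with $\beta$ through $\Phi^{-1}(\beta/(\beta+1))$ and $\Phi(-\sqrt\kappa-\sqrt2)$, are far from optimal.)
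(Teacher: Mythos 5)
Your proposal is correct and follows essentially the same route as the paper's proof of Theorem~\ref{thm:dependent}: the same concentration event, the same lemma that arm~1 survives pruning because $q_1^{(r)}\ge\Phi(-\sqrt2)>1/\beta$, the same lemma that arm~2 is pruned once both arms have $\Omega(\ln T/\Delta^2)$ pulls, and the same geometric-overshoot argument producing the $T^{1/M}$ factor. The only substantive difference is bookkeeping: where the paper asserts the balance $T_1^{(r)}/T_2^{(r)}\in[1/\beta,\beta]$ and the growth bound $T_i^{(r+1)}\le 2\gamma\beta\, T_i^{(r)}$ rather tersely, you derive the balance from $q_1^{(r)}\ge\Phi(-\sqrt2)$ and lower-bound the growth of $T_2^{(r)}$ via the additional observation that $q_2^{(r)}$ stays above an absolute constant while arm~2 is under-sampled --- a more self-contained justification of the same facts.
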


In this section we prove Theorem~\ref{thm:dependent}. We make use of the following standard concentration inequality.

\begin{lemma}[Hoeffding’s inequality]
	\label{lem:chernoff}
	Let $X_1, \dotsc, X_n \in [0, 1]$ be independent random variables and $X = \sum\limits_{i = 1}^n X_i$.
	Then
	\[
	\Pr[X > \bE[X] + t] \leq \exp\left(-{2 t^2}/{n}\right)
	\quad
	\]
	and
	\[
	\quad
	\Pr[X < \bE[X] - t] \leq \exp\left(-{2 t^2}/{n}\right)
	\,.
	\]
\end{lemma}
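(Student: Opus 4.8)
The plan is to prove this via the standard Chernoff bounding (exponential moment) method, which reduces a tail bound to controlling the moment generating function of a single bounded summand. I would first handle the upper tail. For any $s > 0$, Markov's inequality applied to the nonnegative random variable $e^{s(X - \bE[X])}$ gives
\[
\Pr[X > \bE[X] + t] = \Pr\!\left[e^{s(X - \bE[X])} > e^{st}\right] \le e^{-st}\, \bE\!\left[e^{s(X - \bE[X])}\right].
\]
Since the $X_i$ are independent, the moment generating function factorizes as
\[
\bE\!\left[e^{s(X - \bE[X])}\right] = \prod_{i = 1}^n \bE\!\left[e^{s(X_i - \bE[X_i])}\right],
\]
so it suffices to bound the MGF of each centered, bounded summand.

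The key technical ingredient---and the main obstacle---is Hoeffding's lemma: if $\bE[Y] = 0$ and $Y \in [a, b]$ almost surely, then $\bE[e^{sY}] \le \exp\!\left(s^2 (b - a)^2 / 8\right)$. I would establish this by convexity of $y \mapsto e^{sy}$: writing each $y \in [a, b]$ as a convex combination of the endpoints yields $e^{sy} \le \frac{b - y}{b - a} e^{sa} + \frac{y - a}{b - a} e^{sb}$, and taking expectations (using $\bE[Y] = 0$) bounds $\bE[e^{sY}]$ by an explicit function of $a$, $b$, $s$. Writing $\phi(s)$ for the logarithm of that function, a short Taylor argument bounding $\phi''(s) \le (b - a)^2 / 4$, together with $\phi(0) = \phi'(0) = 0$, gives $\phi(s) \le s^2 (b - a)^2 / 8$. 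Applying this with $Y = X_i - \bE[X_i]$, whose range lies in an interval of length at most $1$ because $X_i \in [0, 1]$, yields $\bE[e^{s(X_i - \bE[X_i])}] \le e^{s^2/8}$ for every $i$.

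Combining the displays gives $\Pr[X > \bE[X] + t] \le \exp\!\left(-st + n s^2 / 8\right)$ for all $s > 0$. I would then optimize the free parameter: the exponent is minimized at $s = 4t/n$, and substituting this value collapses the bound to $\exp\!\left(-2 t^2 / n\right)$, which is the claimed upper tail. Finally, the lower tail follows by symmetry: applying the upper-tail inequality to the variables $1 - X_i \in [0, 1]$ (equivalently, reflecting via $-X_i$) turns the event $\{X < \bE[X] - t\}$ into an upper-tail event for the reflected sum, bounded by the same $\exp\!\left(-2 t^2 / n\right)$. The only genuinely nontrivial step is Hoeffding's lemma; everything else is routine bookkeeping with the exponential moment method.
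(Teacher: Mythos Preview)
Your proof is correct and is the standard textbook derivation of Hoeffding's inequality. Note, however, that the paper does not actually prove this lemma: it simply states it as a ``standard concentration inequality'' and uses it as a black box, so there is no paper proof to compare against. Your Chernoff--Hoeffding argument (Markov on the MGF, Hoeffding's lemma via convexity and the second-derivative bound, then optimizing over $s$) is exactly the canonical route and fills in what the paper omits.
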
Define the following event which we will condition on in the rest of the proof.
\begin{equation*}
\label{eq:E}
		\E \triangleq  \left\{\forall{i} \in \{1, 2\}, \forall{t \in [T]} : \abs{\mu_i - {\mu}_i(t)} \le \sqrt{\frac{\ln(2T)}{t}}\right\}\,.
\end{equation*}

\begin{lemma}
\label{lem:event-E}
$\Pr\left[\E\right] \ge 1 - {1}/{T}$.
\end{lemma}
\begin{proof}
	By Hoeffding's inequality (Lemma~\ref{lem:chernoff}), for any $i \in [N]$ and $t \in [T]$, we have 
	\begin{equation}
		\label{eq:probe}
		\Pr\left[\abs{\mu_i - {\mu}_i(t)} \ge \sqrt{\frac{\ln(NT)}{t}}\right] \le 2 e^{-2 \ln(NT)} \le \frac{2}{(NT)^2}.
	\end{equation}
	The lemma follows by applying a union bound on (\ref{eq:probe}) over all $i \in [N]$ and $t \in [T]$.
\end{proof}

We write 
\begin{eqnarray}
&& \bE[\text{Reg}_T(\pi)] \nonumber \\
&=& \bE[\text{Reg}_T(\pi)\ |\ \E] \Pr[\E] + \bE[\text{Reg}_T(\pi)\ |\ \bar{\E}] \Pr[\bar{\E}] \nonumber \\
&\le& \bE[\text{Reg}_T(\pi)\ |\ \E] + T \cdot (1 - \Pr[\E]) \nonumber \\
&=& \bE[\text{Reg}_T(\pi)\ |\ \E] + 1. \label{eq:b-1}
\end{eqnarray}
We thus only need to bound $\bE[\text{Reg}_T(\pi)\ |\ \E]$.   The next lemma indicates that arm $1$ (i.e., the best arm) will never be pruned at Line~\ref{ln:prune} of Algorithm~\ref{alg:dependent} during the process.

\begin{lemma}
\label{lem:X1}
For any $r$ we have $\Pr[X^{(r)}_1 > X^{(r)}_2] > 1/\beta$.
\end{lemma}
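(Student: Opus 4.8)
The goal is to show that the best arm (arm 1) always has probability more than $1/\beta = 1/100$ of beating arm 2 under the Gaussian sampling, so it never gets pruned. We're conditioning on event $\mathcal{E}$.

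Key observation: $X_1^{(r)} \sim \mathcal{N}(\mu_1^{(r)}, \alpha/T_1^{(r)})$ and $X_2^{(r)} \sim \mathcal{N}(\mu_2^{(r)}, \alpha/T_2^{(r)})$. The difference $X_1^{(r)} - X_2^{(r)}$ is Gaussian with mean $\mu_1^{(r)} - \mu_2^{(r)}$ and variance $\alpha/T_1^{(r)} + \alpha/T_2^{(r)}$. So $\Pr[X_1^{(r)} > X_2^{(r)}] = \Phi\left(\frac{\mu_1^{(r)} - \mu_2^{(r)}}{\sqrt{\alpha/T_1^{(r)} + \alpha/T_2^{(r)}}}\right)$.

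Under event $\mathcal{E}$, $\mu_1^{(r)} \ge \mu_1 - \sqrt{\ln(2T)/T_1^{(r)}}$ and $\mu_2^{(r)} \le \mu_2 + \sqrt{\ln(2T)/T_2^{(r)}}$, so $\mu_1^{(r)} - \mu_2^{(r)} \ge \Delta - \sqrt{\ln(2T)/T_1^{(r)}} - \sqrt{\ln(2T)/T_2^{(r)}} \ge -\sqrt{\alpha/T_1^{(r)}} - \sqrt{\alpha/T_2^{(r)}}$ since $\Delta \ge 0$ and $\alpha = \ln(2T)$. The denominator satisfies $\sqrt{\alpha/T_1^{(r)} + \alpha/T_2^{(r)}} \ge \frac{1}{\sqrt 2}\left(\sqrt{\alpha/T_1^{(r)}} + \sqrt{\alpha/T_2^{(r)}}\right)$. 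Hence the argument of $\Phi$ is at least $-\sqrt 2$, giving $\Pr[X_1^{(r)} > X_2^{(r)}] \ge \Phi(-\sqrt 2) > 0.07 > 1/100 = 1/\beta$.

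The main obstacle is being careful with the constants: making sure the bound $\Phi(-\sqrt{2}) > 1/\beta$ actually holds for $\beta = 100$ (it does, since $\Phi(-\sqrt2) \approx 0.0786$), and correctly handling the inequality $\sqrt{a+b} \ge (\sqrt a + \sqrt b)/\sqrt 2$. One subtlety: the lemma is stated without explicitly conditioning on $\mathcal{E}$, but since we are "in the rest of the proof" after defining $\mathcal{E}$, I would phrase it as holding under $\mathcal{E}$. No serious difficulty beyond bookkeeping.

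\begin{proof}
We work under event $\mathcal{E}$. Fix a batch $r$. Since $X_1^{(r)}$ and $X_2^{(r)}$ are independent Gaussians, $X_1^{(r)} - X_2^{(r)} \sim \mathcal{N}\!\left(\mu_1^{(r)} - \mu_2^{(r)},\ \alpha/T_1^{(r)} + \alpha/T_2^{(r)}\right)$, so
\[
\Pr\!\left[X_1^{(r)} > X_2^{(r)}\right] = \Phi\!\left(\frac{\mu_1^{(r)} - \mu_2^{(r)}}{\sqrt{\alpha/T_1^{(r)} + \alpha/T_2^{(r)}}}\right),
\]
where $\Phi$ is the standard normal CDF. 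By the definition of $\mathcal{E}$ and $\alpha = \ln(2T)$, we have $\mu_1^{(r)} \ge \mu_1 - \sqrt{\alpha/T_1^{(r)}}$ and $\mu_2^{(r)} \le \mu_2 + \sqrt{\alpha/T_2^{(r)}}$, and since $\Delta = \mu_1 - \mu_2 \ge 0$,
\[
\mu_1^{(r)} - \mu_2^{(r)} \ge \Delta - \sqrt{\alpha/T_1^{(r)}} - \sqrt{\alpha/T_2^{(r)}} \ge -\left(\sqrt{\alpha/T_1^{(r)}} + \sqrt{\alpha/T_2^{(r)}}\right).
\]
Using $\sqrt{a} + \sqrt{b} \le \sqrt{2(a+b)}$ with $a = \alpha/T_1^{(r)}$ and $b = \alpha/T_2^{(r)}$, the argument of $\Phi$ above is at least $-\sqrt{2}$. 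Therefore
\[
\Pr\!\left[X_1^{(r)} > X_2^{(r)}\right] \ge \Phi(-\sqrt{2}) > 0.07 > \frac{1}{100} = \frac{1}{\beta}. \qedhere
\]
\end{proof}
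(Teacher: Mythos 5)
Your proof is correct and follows essentially the same route as the paper's: condition on $\E$, write $\Pr[X_1^{(r)}>X_2^{(r)}]$ as $\Phi$ of the normalized mean difference, lower-bound the empirical gap by $-(\sqrt{\alpha/T_1^{(r)}}+\sqrt{\alpha/T_2^{(r)}})$, and use $\sqrt{a}+\sqrt{b}\le\sqrt{2(a+b)}$ to conclude the argument of $\Phi$ is at least $-\sqrt{2}$, whence $\Phi(-\sqrt{2})>1/\beta$. The only cosmetic difference is that the paper bounds $\Phi(-\sqrt{2})>0.01$ while you use the sharper $0.07$; both suffice for $\beta=100$.
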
 

\begin{proof}
Conditioned on $\E$, we have for any $i \in \{1, 2\}$, it holds that 
$\abs{\mu^{(r)}_i  - \mu_i} \le \sqrt{\frac{\ln(2T)}{T^{(r)}_i}}$, and		
\begin{eqnarray}
		\mu^{(r)}_1 - \mu^{(r)}_{2} &\ge& \mu_{1} - \mu_{2} - \sqrt{\frac{\ln(2T)}{T^{(r)}_1}} - \sqrt{\frac{\ln(2T)}{T^{(r)}_2}} \nonumber \\
		 &\ge& - \left(\sqrt{\frac{\ln(2T)}{T^{(r)}_1}} + \sqrt{\frac{\ln(2T)}{T^{(r)}_2}}\right),  \label{eq:gap}
\end{eqnarray}
where the second inequality is due to the fact (our assumption) that arm $1$ is the best arm. 
		Using the fact that $\forall x, y >0, x + y \le \sqrt{2} \cdot \sqrt{x^2 + y ^2}$, we have
		\begin{equation}
		\label{eq:a-1}
		\mu^{(r)}_1 - \mu^{(r)}_{2} \ge  - \sqrt{2} \cdot \sqrt{\frac{\ln(2T)}{T^{(r)}_1} + \frac{\ln(2T)}{T^{(r)}_2}}.
		\end{equation}
		Let $Z \triangleq X^{(r)}_1 - X^{(r)}_2$. Recall that $X^{(r)}_1 \sim \N\left(\mu^{(r)}_1, \alpha / T^{(r)}_1\right)$ and  $X^{(r)}_2 \sim \N\left(\mu^{(r)}_2, \alpha / T^{(r)}_2\right)$. We have 
		$$\Pr[X^{(r)}_1 > X^{(r)}_2] = \Pr[Z > 0],$$ where 
		\begin{equation}
		\label{eq:a-2}
		Z \sim \N\left(\mu^{(r)}_1 - \mu^{(r)}_2, \frac{\alpha}{T^{(r)}_{1}} + \frac{\alpha}{T^{(r)}_2}\right).
		\end{equation}
		Let $\Phi(x) = \frac{1}{\sqrt{2\pi}} \int_{-\infty}^{x}  e^{-\frac{x^2}{2}}dx$ be the cumulative distribution function (CDF) of $\N(0, 1)$.  By (\ref{eq:a-1}) and (\ref{eq:a-2}), we have
		\begin{eqnarray*}
			\Pr[Z > 0] &=& \Phi\left(\frac{\mu^{(r)}_1 - \mu^{(r)}_{2}}{\sqrt{\frac{\alpha}{T^{(r)}_1} + \frac{\alpha}{T^{(r)}_2}}}\right) \\
			&\ge& \Phi\left(-\sqrt{2} \cdot \sqrt{\frac{\alpha}{\ln(2T)}}\right) \\ &\geq& \Phi\left(-\sqrt{2}\right) > 0.01 = 1/\beta,
		\end{eqnarray*}
		where the first inequality is due to the monotonicity of the $\Phi(\cdot)$ function.
\end{proof}

The next lemma states that after being pulled for a sufficient number of times, arm $2$ will be pruned at Line~\ref{ln:prune}.

\begin{lemma}
\label{lem:pull-bound}
		When $T^{(r)}_2 > \frac{100 \beta \ln(2T)}{\Delta^2}$ for some $r \ge 1$, we have $\Pr[X^{(r)}_2 > X^{(r)}_1] < {1}/{\beta}$.
\end{lemma}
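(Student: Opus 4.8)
The plan is to first show that by the start of batch~$r$ the best arm has been pulled almost as often as arm~$2$, namely $T^{(r)}_1 \ge T^{(r)}_2/\beta$, and then to re-run the Gaussian tail computation from the proof of Lemma~\ref{lem:X1}, this time for $Z \triangleq X^{(r)}_2 - X^{(r)}_1$. As throughout this section, we condition on $\E$, so that $\abs{\mu^{(r)}_i - \mu_i} \le \sqrt{\ln(2T)/T^{(r)}_i}$ for $i \in \{1,2\}$.

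\emph{Step 1: $T^{(r)}_1 \ge T^{(r)}_2/\beta$.} By Lemma~\ref{lem:X1}, at every batch $r'$ we have $q^{(r')}_1 = \Pr[X^{(r')}_1 > X^{(r')}_2] > 1/\beta \ge q^{(r')}/\beta$, so arm~$1$ is never discarded at Line~\ref{ln:prune} and hence participates in every batch. In batch $s$ it is pulled $\hat q^{(s)}_1\bigl(T^{(s+1)} - T^{(s)}\bigr)$ times, and since the normalizer obeys $\sum_{j \in I^{(s+1)}} q^{(s)}_j \le q^{(s)}_1 + q^{(s)}_2 \le 1$ (the events $\{X^{(s)}_1 > X^{(s)}_2\}$ and $\{X^{(s)}_1 < X^{(s)}_2\}$ are disjoint), we get $\hat q^{(s)}_1 \ge q^{(s)}_1 > 1/\beta$. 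Summing over the $r-1$ batches preceding batch~$r$ and adding arm~$1$'s single initial pull, and using $T^{(1)} = 2$,
\begin{align*}
T^{(r)}_1 &\ge 1 + \frac1\beta \sum_{s=1}^{r-1}\bigl(T^{(s+1)} - T^{(s)}\bigr) \\
&= 1 + \frac{T^{(r)} - 2}{\beta} \;\ge\; \frac{T^{(r)}}{\beta} \;\ge\; \frac{T^{(r)}_2}{\beta},
\end{align*}
where the last two inequalities use $\beta = 100 \ge 2$ and $T^{(r)}_2 \le T^{(r)}$.

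\emph{Step 2: the tail bound.} Combining Step~1 with the hypothesis $T^{(r)}_2 > 100\beta\ln(2T)/\Delta^2$ gives $T^{(r)}_1 > 100\ln(2T)/\Delta^2$, hence $\sqrt{\ln(2T)/T^{(r)}_1} < \Delta/10$ and $\sqrt{\ln(2T)/T^{(r)}_2} < \Delta/(10\sqrt\beta) \le \Delta/10$. Conditioning on $\E$ and using $\mu_1 - \mu_2 = \Delta$, we obtain for $m \triangleq \mu^{(r)}_2 - \mu^{(r)}_1$ the bound $m < -\Delta + \Delta/10 + \Delta/10 = -4\Delta/5$, and for $\sigma^2 \triangleq \alpha/T^{(r)}_1 + \alpha/T^{(r)}_2$ the bound $\sigma^2 = \ln(2T)\bigl(1/T^{(r)}_1 + 1/T^{(r)}_2\bigr) < \Delta^2/100 + \Delta^2/100 < \Delta^2/50$. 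Exactly as in the proof of Lemma~\ref{lem:X1}, $\Pr[X^{(r)}_2 > X^{(r)}_1] = \Pr[Z > 0] = \Phi(m/\sigma)$; since $m \le -4\Delta/5 < 0$ and $0 < \sigma < \Delta/\sqrt{50}$, we have $m/\sigma \le (-4\Delta/5)/(\Delta/\sqrt{50}) = -4\sqrt2$, so by monotonicity of $\Phi$, $\Pr[X^{(r)}_2 > X^{(r)}_1] \le \Phi(-4\sqrt2) < 0.01 = 1/\beta$.

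The step I expect to be the main obstacle is Step~1. It hinges on arm~$1$ never being pruned (which is only guaranteed under $\E$, via Lemma~\ref{lem:X1}) together with the two-arm structure, which forces the normalized weight $\hat q^{(s)}_1$ to be at least $q^{(s)}_1$ and hence keeps a constant fraction of every batch on arm~$1$; for more than two arms this is exactly where the argument would have to change. Step~2 is then a routine Gaussian estimate with a wide numerical margin --- even a crude bound such as $\Phi(-3)$ already beats $1/\beta$ --- so the constants $100$ and $\beta = 100$ are far from delicate.
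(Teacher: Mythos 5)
Your proof is correct and follows essentially the same route as the paper's: establish that $T^{(r)}_1$ is within a factor $\beta$ of $T^{(r)}_2$, then bound $\Pr[X^{(r)}_2 > X^{(r)}_1] = \Phi(m/\sigma) \le \Phi(-4\sqrt{2}) < 1/\beta$ via the same Gaussian tail computation (the paper writes the same threshold as $\Phi(-8/\sqrt{2})$). Your Step~1 is in fact more carefully justified than the paper's one-line appeal to the pruning condition (its equation relating $q^{(r)}_1/q^{(r)}_2$ to $T^{(r)}_1/T^{(r)}_2$), since you explicitly track $\hat q^{(s)}_1 \ge q^{(s)}_1 > 1/\beta$ across batches using Lemma~\ref{lem:X1}.
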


\begin{proof}
First, by the pruning condition at Line~\ref{ln:prune} of Algorithm~\ref{alg:dependent}, in any batch $r$ when there are still two arms left, we have  
\begin{equation}
\label{eq:c-1}
\frac{q_1^{(r)}}{q_2^{(r)}} \in \left[\frac{1}{\beta}, \beta\right] \quad \text{or} \quad \frac{T_1^{(r)}}{T_2^{(r)}} \in \left[\frac{1}{\beta}, \beta\right]. 
\end{equation}

If $T^{(r)}_2 > \frac{100 \beta \ln(2T)}{\Delta^2}$, then by (\ref{eq:c-1}) we have $T^{(r)}_1 > \frac{100 \ln(2T)}{\Delta^2}$.  Conditioned on event $\E$, we have
$\abs{\mu^{(r)}_i - \mu_i} \le \frac{\Delta}{10}$ for $i = 1, 2$.
Since $\mu_1 - \mu_2 = \Delta$, we have 
\begin{equation}
\label{eq:c-3}
\mu^{(r)}_1 - \mu^{(r)}_2 \ge \frac{8}{10} \Delta.
\end{equation}

Let $Z \triangleq X^{(r)}_2 - X^{(r)}_1$. We have $\Pr[X^{(r)}_2 > X^{(r)}_1] = \Pr[Z > 0]$, where 
\begin{equation}
\label{eq:c-4}
		Z \sim \N\left(\mu^{(r)}_2 - \mu^{(r)}_1, \frac{\alpha}{T^{(r)}_{1}} + \frac{\alpha}{T^{(r)}_2}\right).
\end{equation}
By (\ref{eq:c-3}) and (\ref{eq:c-4}), we get
\begin{eqnarray*}
			\Pr[Z > 0] &=& \Phi\left(\frac{\mu^{(r)}_2 - \mu^{(r)}_1}{\sqrt{\alpha / T^{(r)}_1 + \alpha / T^{(r)}_2}}\right) \\
			&\le& \Phi\left(-\frac{8 \Delta }{10} \cdot \frac{1}{\sqrt{2} (\Delta / 10)}\right) \\
			&=& \Phi\left(-\frac{8}{\sqrt{2}}\right) < 10^{-4} < 1/\beta,
\end{eqnarray*}
where in the first inequality we have used the fact that $T^{(r)}_1, T^{(r)}_2 > \frac{100\ln(2T)}{\Delta^2}$.
\end{proof}

Now we are ready to bound the expected regret.  It is easy to see from the description of the algorithm that for any $r$ we have $\gamma^{(r+1)}/(\beta+1) \le T^{(r)}_i \le \gamma^{(r+1)}$.  We thus have 
\begin{equation}
\label{eq:d-1}
T^{(r+1)}_i / T^{(r)}_i \le 2\gamma \beta.
\end{equation}
Consider the largest index $r$ such that $T^{(r)}_2 < \frac{100\beta \ln(2T)}{\Delta^2}$. We have  $T^{(r+1)}_2 \ge \frac{100\beta \ln(2T)}{\Delta^2}$, and thus arm $2$ will be pruned in the $(r+1)$-th batch.  
By (\ref{eq:d-1}) we also have $T^{(r+1)}_2 \le 2\gamma \beta T^{(r)}_2 \le \frac{200\gamma \beta^2 \ln(2T)}{\Delta^2}$.  Therefore, we can bound
\begin{equation}
\label{eq:d-2}
\bE[\text{Reg}_T(\pi)\ |\ \E] \le \Delta T^{(r)}_2 \le \frac{200\gamma \beta^2 \ln(2T)}{\Delta}.  
\end{equation}
Plugging (\ref{eq:d-2}) to (\ref{eq:b-1}) we get
$\bE[\text{Reg}_T(\pi)] = O\left(\frac{T^{1/M} \ln T}{\Delta}\right)$  (recall that we have set $\beta = 100$ and $\gamma = T^{1/M}$).  


\section{Instance-Independent Algorithm}
\label{sec:independent}

In this section, we present a batched TS-based algorithm whose regret is independent of the input instance.
We will reuse the notations $T^{(r)}, T^{(r)}_i$, and $\mu^{(r)}_i$ defined in Section~\ref{sec:dependent}.  The algorithm is described in Algorithm~\ref{alg:independent}.

Let us describe Algorithm~\ref{alg:independent} briefly in words.  Same as Algorithm~\ref{alg:dependent}, at the beginning of each batch $r$, we compute for each $i \in [N]$ the probability $q^{(r)}_i$ such that the sample from distribution $\N\left(\mu^{(r)}_i, \alpha / T^{(r)}_i\right)$ is larger than samples from $\N\left(\mu^{(r)}_j, \alpha / T^{(r)}_j\right)$ for $j \in I^{(r)}, j \neq i$. We then prune all arms $i$ whose corresponding $q^{(r)}_i$ is smaller than $q^{(r)}$, where $q^{(r)}$ is the maximum value among all $q^{(r)}_i\ (i \in [N])$.  The main difference between Algorithm~\ref{alg:independent} and Algorithm~\ref{alg:dependent} lies in the number of pulls on each surviving arm in each batch: In Algorithm~\ref{alg:independent}, the values $T^{(r)}$ are chosen such that the lengths of batches grow much faster.  In each batch of Algorithm~\ref{alg:independent}, the number of pulls on each arm $i$ is again proportional to $q^{(r)}_i$.

\begin{algorithm}[t]
\caption{Batched-TS-I$(N, T, \alpha, \beta, M)$}
\label{alg:independent}
\KwIn{A set of $N$ arms $\{1, \ldots, N\}$, time horizon $T$, \#batches $M$, parameters $\alpha, \beta$}

	Initialize $I^{(1)} \gets I$\;
	Pull each arm once; For each $i \in [N]$, $T^{(1)}_i \gets 1$, and $\mu^{(1)}_i \gets \text{reward of pulling arm $i$}$ \label{ln:init}\;
	Let $a \gets (T - N)^{\frac{1}{2 - 2^{1-M}}}$ \;
	Set $u_1 \gets a$, and $u_r \gets a \sqrt{u_{r-1}}$ for $r = 2, \dotsc, M$ \;
	Set $T^{(1)} \gets N$, and $T^{(r)} \gets \lfloor u_{r - 1}\rfloor + N$ for $r = 2, \dotsc, M+1$\;
	\For{$r = 1, \dotsc, M$}{
		$\forall i \in I^{(r)}$, let $X^{(r)}_i$ denote the random variable distributed according to $\N\left(\mu^{(r)}_i, \alpha / T^{(r)}_i\right)$ \;
		\lForEach{$i \in I^{(r)}$}{$q^{(r)}_i \gets \Pr\left[X^{(r)}_i > \max\limits_{j \in I^{(r)}, j \neq i} X^{(r)}_j\right]$}
		$q^{(r)} \gets \max\limits_{i \in I^{(r)}} q^{(r)}_i$ \;
		$I^{(r + 1)} \gets \left\{q^{(r)}_i \geq q^{(r)} /\beta \mid i \in I^{(r)}\right\}$ \label{ln:prune-2} \;
		\lForEach{$i \in I^{(r + 1)}$}{$\hat{q}^{(r)}_i = q^{(r)}_i / \sum\limits_{j \in I^{(r + 1)}} q^{(r)}_j$}
		\ForEach{$i \in I^{(r  +1)}$}{
			pull arm $i$ for $\hat{q}^{(r)}_i \cdot (T^{(r + 1)} - T^{(r)})$ times; set $T_i^{(r + 1)} \gets T^{(r)}_i + \hat{q}^{(r)}_i \cdot (T^{(r + 1)} - T^{(r)})$ \;
			compute $\mu^{(r + 1)}_i$ using $\mu^{(r)}_i$, $T^{(r)}_i$ and the pull outcomes on arm $i$ in the current batch \;
		}
	}
\end{algorithm}

\vspace{2mm}
\noindent{\bf Analysis.\ \ }  Our theoretical analysis for Algorithm~\ref{alg:independent} is again for the case when $N=2$.   
\smallskip

\begin{theorem}
\label{thm:independent}
	Setting $\alpha = \ln(2T)$ and $\beta = 100$, Batched-TS-I$(2, T, \alpha, \beta, M)$ (Algorithm~\ref{alg:independent}) has the expected regret at most $O\left(T^{1/(2 - 2^{1-M})} \sqrt{\ln T}\right)$ and uses at most $M+1$ batches.
\end{theorem}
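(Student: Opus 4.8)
The plan is to follow the template of the proof of Theorem~\ref{thm:dependent}. Lemmas~\ref{lem:event-E},~\ref{lem:X1}, and~\ref{lem:pull-bound} carry over to Algorithm~\ref{alg:independent}, since their proofs only use the event $\E$, the distributions $X^{(r)}_i\sim\N(\mu^{(r)}_i,\alpha/T^{(r)}_i)$, and the pruning rule, and Line~\ref{ln:prune-2} has exactly the form of Line~\ref{ln:prune}. Conditioning on $\E$ and repeating the calculation leading to~(\ref{eq:b-1}), $\bE[\text{Reg}_T(\pi)]\le\bE[\text{Reg}_T(\pi)\mid\E]+1$, so it is enough to bound the conditional regret. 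By Lemma~\ref{lem:X1}, $q^{(r)}_1>1/\beta\ge q^{(r)}/\beta$ in every batch, so arm~$1$ is never pruned; hence the conditional regret equals $\Delta\cdot n_2$, where $n_2$ is the total number of pulls of arm~$2$, and the task reduces to showing $n_2\le\tau+a\sqrt{(\beta+2)\tau}$ with $\tau\triangleq 100\beta\ln(2T)/\Delta^2$.

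I would combine two facts. First, while arm~$2$ is alive the pull counts stay balanced, $T^{(r)}_1/T^{(r)}_2\in[1/\beta,\beta]$: in a batch with both arms present, pulls are allocated proportionally to the $q^{(r)}_i$, and the pruning rule (applied at the previous batch) forces $q^{(r-1)}_1/q^{(r-1)}_2\in[1/\beta,\beta]$, so a short induction on $r$ from the base $T^{(1)}_1=T^{(1)}_2=1$ gives the claim (this is the $N=2$ instance of~(\ref{eq:c-1})). Second, by Lemma~\ref{lem:pull-bound}, once $T^{(r)}_2>\tau$ we have $q^{(r)}_2<1/\beta$; in fact its proof yields $q^{(r)}_2<10^{-4}$, which is less than $q^{(r)}/\beta$ since $q^{(r)}=q^{(r)}_1=1-q^{(r)}_2>0.99$, so arm~$2$ is pruned at iteration~$r$. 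Let $R$ be the last batch in which arm~$2$ is pulled (the largest $r$ with $2\in I^{(r+1)}$, or $R=M$ if arm~$2$ is never pruned). The second fact gives $T^{(R)}_2\le\tau$, and since $2\in I^{(R)}$ the first fact gives $T^{(R)}=T^{(R)}_1+T^{(R)}_2\le(\beta+1)\tau$, whence $u_{R-1}<T^{(R)}-N+1\le(\beta+2)\tau$ (using $\tau\ge 100\beta\ln(2T)\ge 1$).

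Consequently the last batch arm~$2$ participates in has length $T^{(R+1)}-T^{(R)}\le u_R=a\sqrt{u_{R-1}}\le a\sqrt{(\beta+2)\tau}$ (with trivial adjustments when $R\le 1$, and using $u_M=a^{\,2-2^{1-M}}=T-N$ if arm~$2$ survives to the end), so $n_2=T^{(R+1)}_2\le T^{(R)}_2+(T^{(R+1)}-T^{(R)})\le\tau+a\sqrt{(\beta+2)\tau}$. Thus $\bE[\text{Reg}_T(\pi)\mid\E]\le\Delta\tau+\Delta a\sqrt{(\beta+2)\tau}=\tfrac{100\beta\ln(2T)}{\Delta}+a\sqrt{100\beta(\beta+2)\ln(2T)}$. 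I would then kill the $\tfrac{100\beta\ln(2T)}{\Delta}$ term with a case split against $\Delta_0\triangleq\sqrt{100\beta\ln(2T)/T}$: if $\Delta<\Delta_0$, bound the conditional regret trivially by $\Delta T<\sqrt{100\beta\,T\ln(2T)}$; if $\Delta\ge\Delta_0$, the bound above is at most $\sqrt{100\beta\,T\ln(2T)}+a\sqrt{100\beta(\beta+2)\ln(2T)}$. Since $a=(T-N)^{1/(2-2^{1-M})}\le T^{1/(2-2^{1-M})}$ and $\tfrac{1}{2-2^{1-M}}\ge\tfrac12$, both quantities are $O(T^{1/(2-2^{1-M})}\sqrt{\ln T})$, and with~(\ref{eq:b-1}) this proves the regret bound. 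The batch count is immediate: one initialization batch (Line~\ref{ln:init}) plus one per iteration of the for-loop, $M+1$ total; the non-integral quantities $\lfloor u_r\rfloor$ and $\hat q^{(r)}_i(T^{(r+1)}-T^{(r)})$ are handled by rounding, perturbing each $T^{(r)}_i$ by $O(1)$ and the regret by $O(M)$.

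The step I expect to be the crux is the second paragraph: one must argue that a surviving arm~$2$ keeps accruing a constant fraction of every batch, so that $T^{(R)}$—and hence the prefix length $u_{R-1}$ that governs the size $a\sqrt{u_{R-1}}$ of the batch in which arm~$2$ is finally eliminated—never exceeds $\tau$ by more than a constant factor. Without this, arm~$2$ could linger while being pulled rarely, the batch in which it dies could be as long as $\Theta(a\sqrt{T})$, and the regret bound would fail.
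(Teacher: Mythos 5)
Your proposal is correct and follows essentially the same route as the paper's proof: condition on $\E$, invoke Lemma~\ref{lem:X1} so arm~$1$ survives, use the balance $T^{(r)}_1/T^{(r)}_2\in[1/\beta,\beta]$ together with Lemma~\ref{lem:pull-bound} to show arm~$2$ is eliminated in the first batch after $T^{(r)}_2$ exceeds $100\beta\ln(2T)/\Delta^2$, bound the overshoot of that batch via $u_r=a\sqrt{u_{r-1}}$, and case-split on $\Delta$ against $\sqrt{100\beta\ln(2T)/T}$. You are in fact slightly more careful than the paper on two minor points (checking $q^{(r)}_2<q^{(r)}/\beta$ rather than just $q^{(r)}_2<1/\beta$, and handling rounding), but the argument is the same.
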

\begin{proof}
	The proof for Theorem~\ref{thm:independent} is similar to that for Theorem~\ref{thm:dependent}, and we will reuse some notations and lemmas.
	
	We again condition on the event $\E$ defined in (\ref{eq:E}), and focus on the quantity $\bE[\text{Reg}_{T}(\pi) \mid \E]$.  W.l.o.g., we still assume arm $1$ is the best arm.  By the pruning step at Line~\ref{ln:prune-2} of Algorithm~\ref{alg:independent}, we again have that in any batch $r$ when there are still two arms left, we have $T_1^{(r)}/T_2^{(r)} \in [1/\beta, \beta]$.
	
	Recall that $\Delta \triangleq \Delta_2$ is defined to be the gap between the means of the two arms.  In the case when $\Delta < \sqrt{\frac{100\beta \ln(2T)}{T}}$,  the regret can be bounded by 
	\begin{eqnarray*}
		\text{Reg}_T(\pi) &\le& T \Delta \le \sqrt{100 \beta T \ln(2T)} \\
		&=& O\left(T^{1/(2 - 2^{1-M})} \sqrt{\ln T}\right)
	\end{eqnarray*} 
	for any integer $M \ge 1$.
	
	We next consider the case when $\Delta \ge \sqrt{\frac{100\beta \ln(2T)}{T}}$.
	Consider the number of pulls on arm $2$.   If $T_2^{(r)} \le \frac{100\beta \ln(2T)}{\Delta^2}$ for any batch $r$, then the regret is bounded by 
	\begin{eqnarray*}
		\Delta T^{(r)}_2 &\le& \frac{100 \beta \ln(2T)}{\Delta} \le \sqrt{100 \beta T \ln(2T)} \\
		&=& O\left(T^{1/(2 - 2^{1-M})} \sqrt{\ln T}\right)
	\end{eqnarray*} 
	for any integer $M \ge 1$.
	
	Otherwise, let $m$ be the smallest index such that
	$T^{(m)}_2 > \frac{100 \beta \ln(2T)}{\Delta^2}$. We thus have $T^{(m-1)}_2 \le \frac{100 \beta \ln(2T)}{\Delta^2}$, and consequently, 
	\begin{eqnarray*}
		T^{(m-1)} &=& T^{(m-1)}_1 + T^{(m-1)}_2 \\
		&\le& (1+\beta) \frac{100 \beta \ln(2T)}{\Delta^2} \\ 
		&\le&  \frac{200 \beta^2 \ln(2T)}{\Delta^2}.
	\end{eqnarray*}
	By our choices of $T^{(r)}$ we have 
	\begin{eqnarray*}
		T^{(m)} &\le& a \sqrt{T^{(m - 1)}} \le a \cdot \frac{20 \beta \sqrt{\ln(2T)}}{\Delta} \\
		&\le& 20\beta T^{{1}/{(2-2^{1-M})}} \cdot \frac{\sqrt{\ln(2T)}}{\Delta}.
	\end{eqnarray*}
	
	It is easy to see that Lemma~\ref{lem:X1} still holds in the setting of Algorithm~\ref{alg:independent}.  We thus have that arm $1$ will never be pruned.  Next, by Lemma~\ref{lem:pull-bound}, after the $m$-th batch, arm $2$ will be pruned. Therefore, the total regret of Algorithm~\ref{alg:independent} can be bounded by
	\begin{equation*}
		\Delta \cdot 20\beta T^{\frac{1}{2-2^{1-M}}} \cdot \sqrt{\ln(2T)} \cdot {1}/{\Delta} = O\left(T^{{1}/{(2-2^{1-M})}} \sqrt{\ln T}\right).
	\end{equation*}
	
	Finally, it is clear that the total number of batches used by Algorithm~\ref{alg:independent} is $(M+1)$ where the extra batch is due to the initialization step (Line~\ref{ln:init}).
\end{proof}

Setting $M = \log\log T - 1$, we have the following corollary.
\begin{corollary}
\label{cor:independent}
	Batched-TS-I$(2, T, \ln(2T), 100, \log\log T - 1)$ has the expected regret at most $O(\sqrt{T\ln T})$ and uses at most $\log\log T$ batches.
\end{corollary}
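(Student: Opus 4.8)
The plan is to instantiate Theorem~\ref{thm:independent} with the parameter choice $M = \log\log T - 1$ (logarithms to base $2$) and then simplify the two resulting bounds. The batch count is immediate: Theorem~\ref{thm:independent} guarantees at most $M+1$ batches, and $M+1 = \log\log T$, which is exactly the claimed bound. This is valid whenever $M \ge 1$, i.e.\ whenever $\log\log T \ge 2$; for the finitely many smaller $T$ the corollary is vacuous, since any run then uses $O(1)$ batches and incurs regret $O(T) = O(\sqrt{T\ln T})$ trivially (alternatively, one rounds $M$ up to $\max\{1,\lceil \log\log T - 1\rceil\}$, which only helps).

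The substantive step is to check that the regret bound $O\!\left(T^{1/(2 - 2^{1-M})}\sqrt{\ln T}\right)$ from Theorem~\ref{thm:independent} collapses to $O(\sqrt{T\ln T})$ under this choice of $M$. First I would compute the exponent: with $M = \log\log T - 1$ we have $2^{1-M} = 2^{2-\log\log T} = 4/\log T$, hence
\[
\frac{1}{2 - 2^{1-M}} \;=\; \frac{1}{2 - 4/\log T} \;=\; \frac{1}{2}\cdot\frac{1}{1 - 2/\log T} \;=\; \frac12 + \frac{1}{\log T} + O\!\left(\frac{1}{\log^2 T}\right).
\]
Therefore $T^{1/(2-2^{1-M})} = T^{1/2}\cdot T^{1/\log T}\cdot T^{O(1/\log^2 T)}$, and since $T^{1/\log T} = 2$ is an absolute constant while $T^{O(1/\log^2 T)} = 1 + o(1)$, the whole factor is $O(\sqrt{T})$. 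Multiplying by the $\sqrt{\ln T}$ factor from the theorem yields $O(\sqrt{T\ln T})$, as claimed.

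There is no real obstacle here; the only points requiring a little care are (i) fixing the base of the logarithm consistently so that $2^{\log\log T} = \log T$ and $T^{1/\log T}$ is a constant, and (ii) verifying that $M = \log\log T - 1$ is a legitimate positive-integer value of the number-of-batches parameter, which I would dispatch with the rounding/vacuity remark above. Everything else is bookkeeping inside the already-established Theorem~\ref{thm:independent}, so the corollary follows.
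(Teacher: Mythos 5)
Your proposal is correct and matches the paper's (implicit) argument: the corollary is stated as an immediate instantiation of Theorem~\ref{thm:independent} with $M=\log\log T-1$, and your computation that $T^{1/(2-2^{1-M})}=T^{1/(2-4/\log T)}=O(\sqrt{T})$ is exactly the bookkeeping the paper leaves to the reader. The remarks on integrality of $M$ and small $T$ are sensible extra care but do not change the route.
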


\section{Experiments}
\label{sec:exp}

We have implemented and tested {\tt Batched-TS-D} (Algorithm~\ref{alg:dependent}, \TSD\ for short) and {\tt Batched-TS-I} (Algorithm~\ref{alg:independent}, \TSI\ for short).  By default, we set parameters $\alpha = 1$, $\beta = 100$, and round parameter $M = 20$. We  slightly modified \TSI\ by removing the initialization batch and perform the same number of pulls on each arm in the subsequent batch. The motivation for such a modification is that the information about empirical means of the arms collected by the initialization batch (one pull on each arm) is very limited, and thus the subsequent first Thompson sampling batch is close to a uniform pull. With this modification we can save one batch of computation.  

We note that in Theorem~\ref{thm:dependent} and \ref{thm:independent} we have set $\alpha = \ln(NT)$ for the two algorithms to facilitate the theoretical analysis (more precisely, to use the union bound in Lemma~\ref{lem:event-E}), but in the experimental study we found that $\alpha = 1$, which is also the parameter used in the standard Thompson sampling, already gives good performance. A more careful tuning of the parameters $\alpha, \beta$ will give better performance for \TSD\ and \TSI\ on individual datasets, but we choose not to optimize it for the sake of demonstrating the robustness of our proposed algorithms. 

As mentioned previously, we also test the performance of the two algorithms if we remove their pruning steps.  We denote the two corresponding algorithms \TSDv\ and \TSIv.

We compare \TSD\ and \TSI\ with two sets of existing algorithms.  The first set is two fully sequential algorithms.
(1) \TS  \cite{AG12}: the standard Thompson sampling algorithm for multi-armed bandits. 
(2) \UCB \cite{ACF02}: the standard upper bound confidence algorithm for multi-armed bandits.

The second set is four batched algorithms. For all these algorithms, we set the parameters such that the maximum number of batches is $20$.  
(1) \BSE \cite{GHRZ19}: an elimination based algorithm with instance-dependent bounds using $\Theta(\log T)$ batches.  
(2) \BAE \cite{GHRZ19}: an elimination based algorithm with instance-independent bounds using $\Theta(\log \log T)$ batches.
(3) \UCBI \cite{AO10}: the {\tt improved-UCB} algorithm in \cite{AO10} which can naturally be implemented in the batched setting with $\Theta(\log T)$ batches.
(4) \KMS\ \cite{KMS21}: a batched Thompson sampling algorithm with Gaussian prior using $\Theta(N \log T)$ batches.  


For all the tested algorithms, once there is only one arm left after some batch (that is, the algorithm ``converges'' to the best arm), we make the rest pulls a {\em single} batch.

\vspace{2mm}
\noindent{\bf Datasets and Experimental Environments.\ \ }  We use a combination of synthetic and real datasets. In all datasets we have a set of arms under Bernoulli distributions with means $\mu_i\ (i \in [N])$.  In the first three datasets we have two arms ($N = 2$) with the following means.
\begin{itemize}
	\item[] DS1: $\mu_1 = 0.9$ and $\mu_2 = 0.6$;
	\item[] DS2: $\mu_1 = 0.9$ and $\mu_2 = 0.8$;
	\item[] DS3: $\mu_1 = 0.55$ and $\mu_2 = 0.45$.
\end{itemize}
In the next three datasets we have ten arms ($N = 10$) with the following means.
\begin{itemize}
	\item[] DS4 (one-shot): $\mu_1 = 0.9$, and for $\forall{i} \neq 1 : \mu_i = 0.8$;
	\item [] DS5 (uniform): $\mu_i = 0.9 - (i - 1) / 20$;
	\item [] DS6 (clustered): $\mu_1 = 0.9$, $\mu_2 = \mu_3 = \mu_4 = 0.8$, $\mu_5 = \mu_6 = \mu_7 = 0.7$, $\mu_8 = \mu_9 = \mu_{10} = 0.6$.
\end{itemize}
We also test on a real-world dataset MovieLens~\cite{HK16}.
\begin{itemize}
	\item [] MOVIE: 
We select the movies scored by at least $20,000$ users; there are 588 such movies. For the $i$-th movie, we set $\mu_i \in [0,1]$ as the average rating divided by $5$.
\end{itemize} 

\begin{figure*}[h!t]
	\begin{center}
		\includegraphics[width=.32\textwidth]{./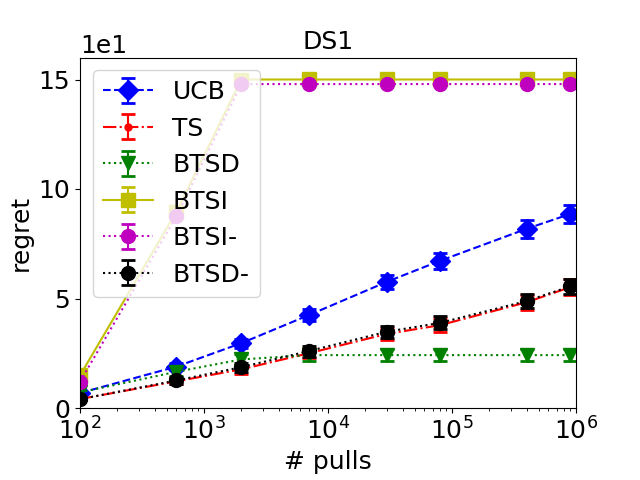}
		\includegraphics[width=.32\textwidth]{./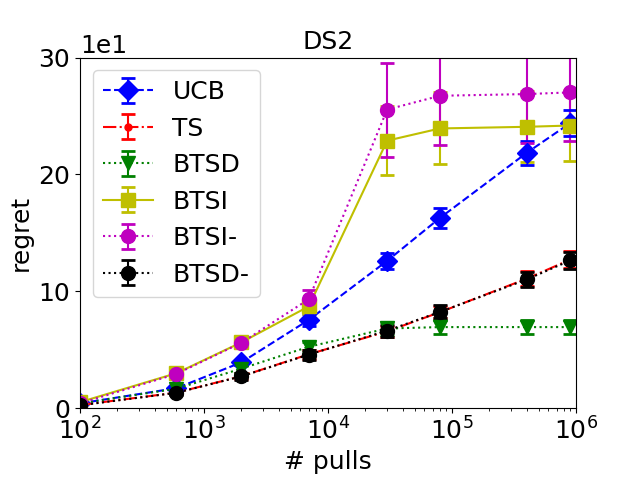}
		\includegraphics[width=.32\textwidth]{./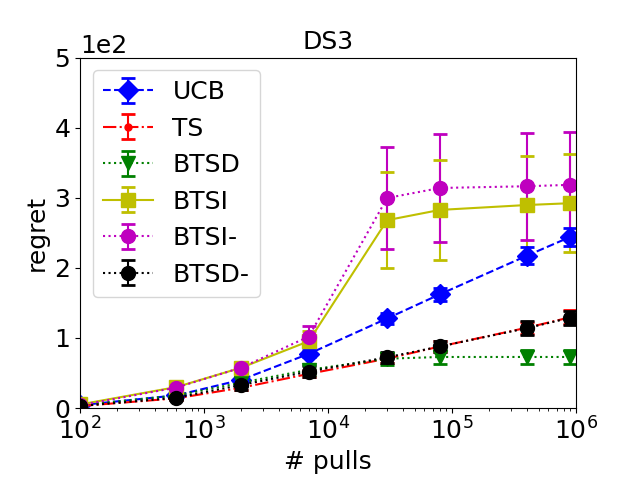}
		\includegraphics[width=.32\textwidth]{./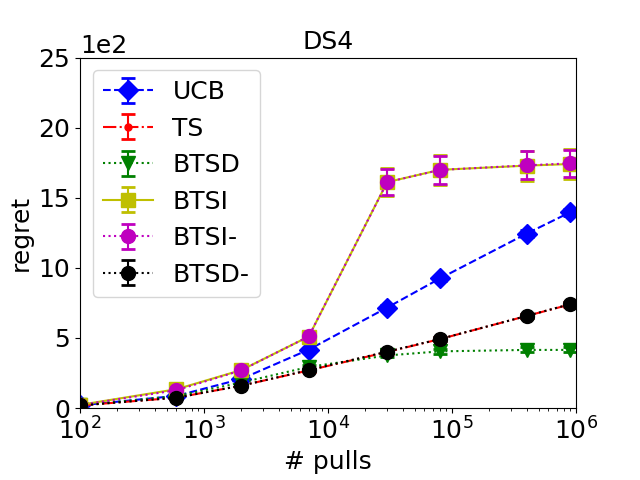}
		\includegraphics[width=.32\textwidth]{./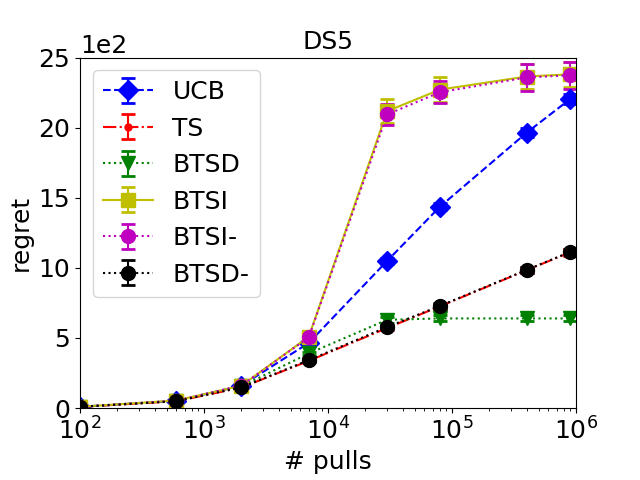}
		\includegraphics[width=.32\textwidth]{./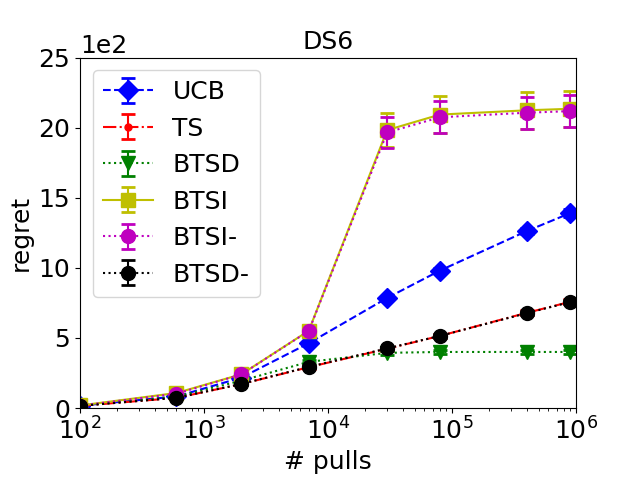}
	\end{center}
	\caption{Regret on DS1 -- DS6. Curves for \TSDv\ and \TS\ almost overlap; curves for \TSI\ and \TSIv\ almost overlap.}\label{fig:exp1}
\end{figure*}

\begin{figure*}[h!t]
	\begin{center}
		\includegraphics[width=.32\textwidth]{./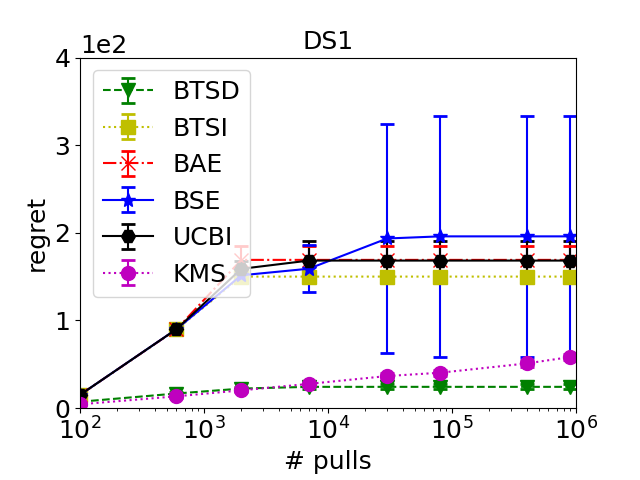}
		\includegraphics[width=.32\textwidth]{./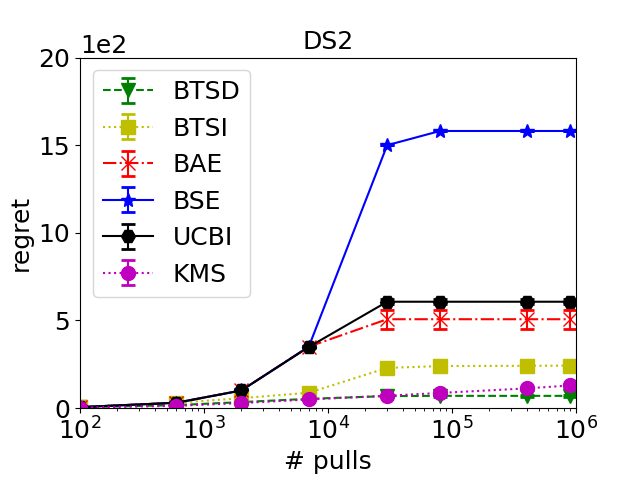}
		\includegraphics[width=.32\textwidth]{./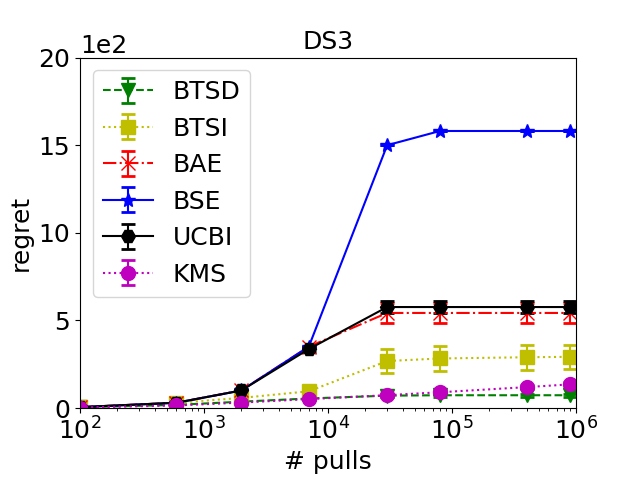}
		\includegraphics[width=.32\textwidth]{./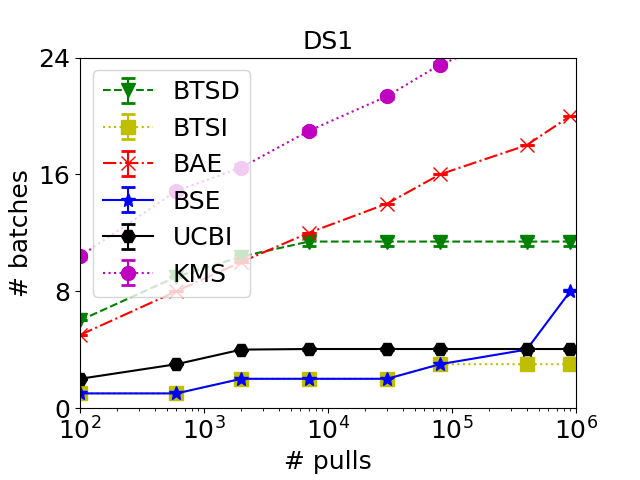}
		\includegraphics[width=.32\textwidth]{./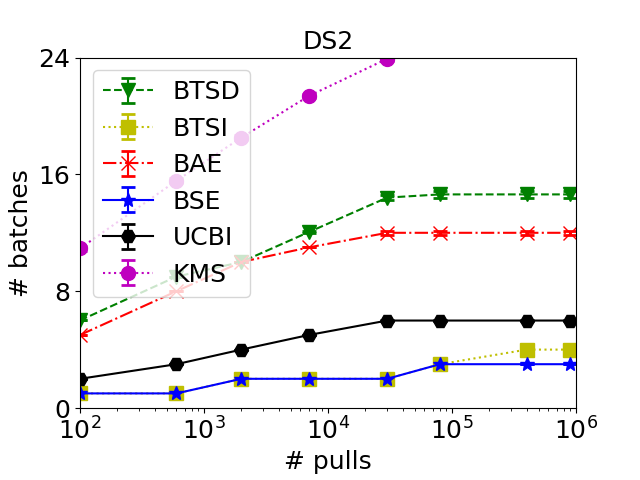}
		\includegraphics[width=.32\textwidth]{./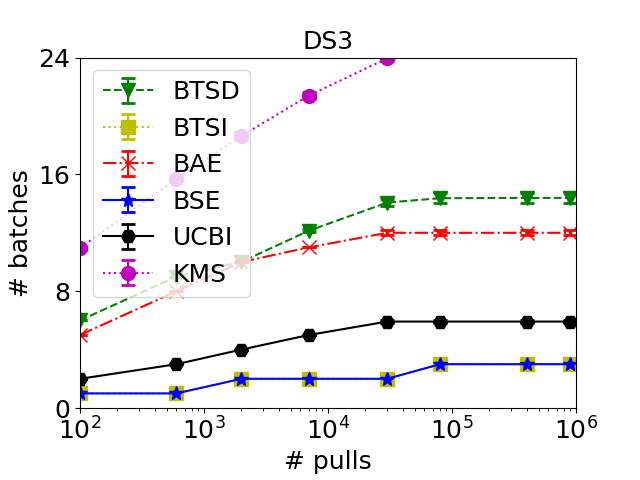}
	\end{center}
	\caption{Regret and \#batch on DS1 -- DS3. Curves for batch costs of \TSI\ and \BSE\ almost overlap}\label{fig:exp2}
\end{figure*}

We measure the regret and the number of batches used by the aforementioned algorithms over a sequence of arm pulls on these datasets. The results are average over $100$ runs.  Error bars measure the standard deviation.

All algorithms were implemented using the Kotlin programming language. All experiments were conducted in PowerEdge R740 server equipped with  $2 \times$Intel Xeon Gold 6248R 3.0GHz (24-core/48-thread per CPU) and 256GB RAM.

\vspace{1mm}
\noindent{\bf Experiments and Results.\ \ }  Our experiments consist of three parts.  We first compare our algorithms \TSI/\TSIv\ and \TSD/\TSDv\ with \TS\ and \UCB\ in the sequential setting on datasets DS1-DS6.  The results are described in Figure~\ref{fig:exp1}. We have the following assumptions.  The performance of \TSDv\ is close to \TSD, and is almost identical with the standard {\em fully adaptive} Thompson sampling \TS. The performance of \TSD\ is even better than \TS, and is much better than \UCB; the former may due to the active pruning step in \TSD.  The performance of \TSIv\ is almost identical with \TSI; both are worse than \TSD.  This is mainly because \TSI\ uses fewer batches and larger batch sizes at the beginning; we will present their batch costs in the next set of experiments.

\begin{figure*}[h!t]
	\begin{center}
		\includegraphics[width=.32\textwidth]{./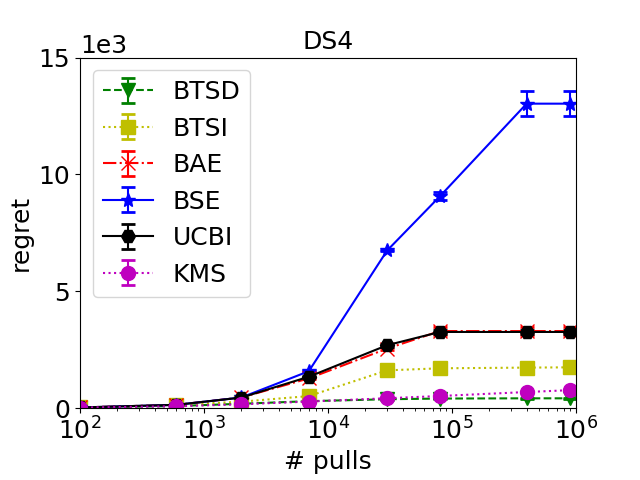}
		\includegraphics[width=.32\textwidth]{./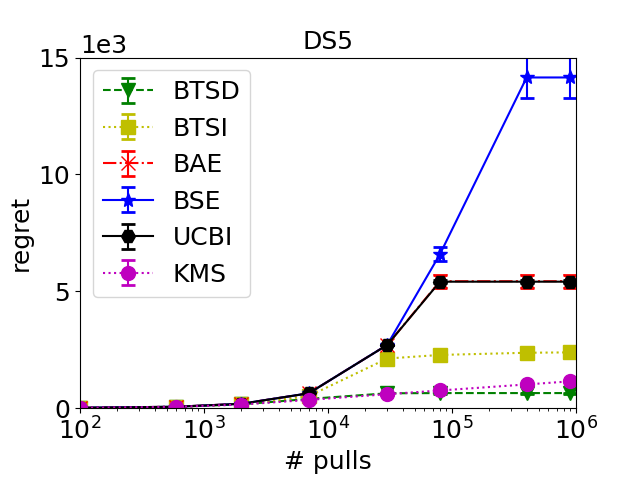}
		\includegraphics[width=.32\textwidth]{./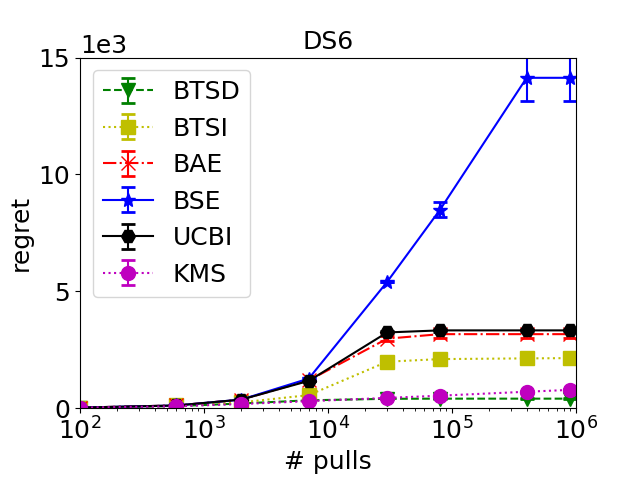}		
		\includegraphics[width=.32\textwidth]{./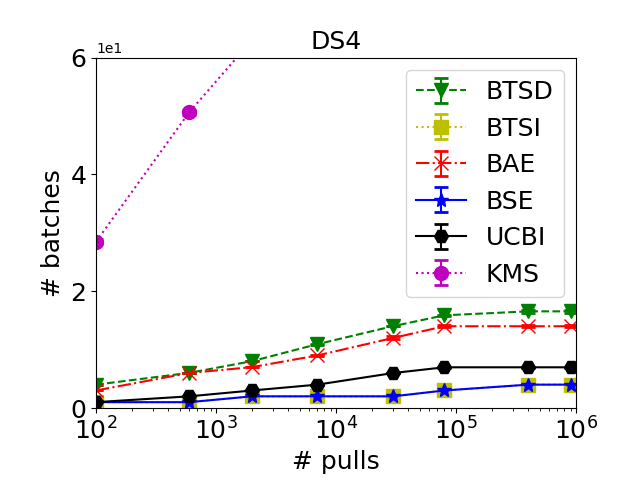}
		\includegraphics[width=.32\textwidth]{./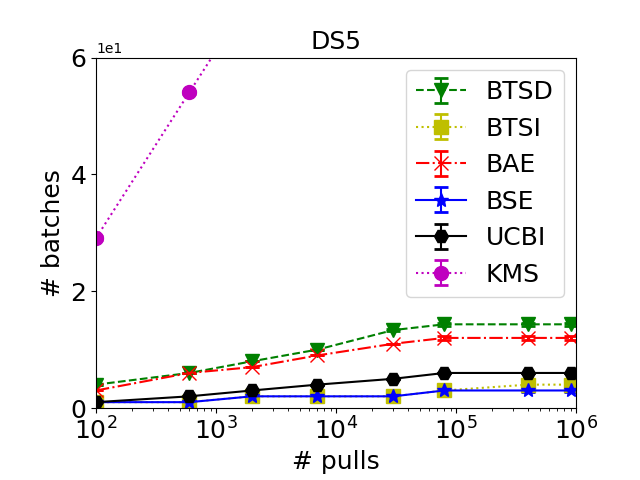}
		\includegraphics[width=.32\textwidth]{./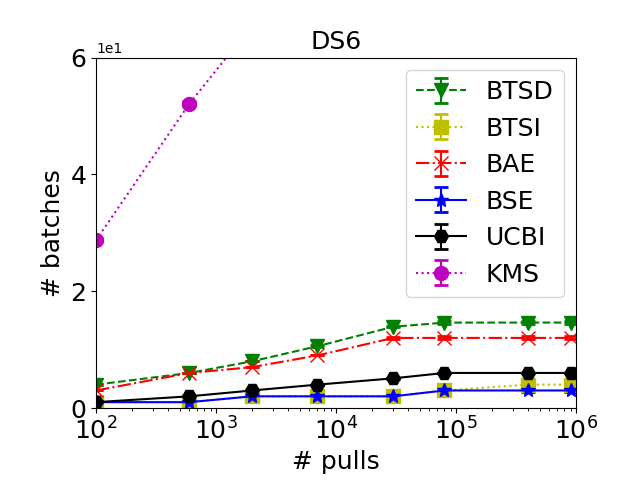}
	\end{center}
	\caption{Regret and \#batch on DS4 -- DS6. Curves for batch costs of \TSI\ and \BSE\ almost overlap}\label{fig:exp3}
\end{figure*}

We next compare our algorithms \TSI\ and \TSD\ with \BSE, \BAE, \UCBI, and \KMS\ in the batched setting on DS1-DS6.  The results are described in Figure~\ref{fig:exp2} and \ref{fig:exp3}.   We observe that in terms of regret, both \TSI\ and \TSD\ significantly outperform \BSE, \BAE, and \UCBI.  Similar to that in Figure~\ref{fig:exp1}, \TSD\ outperforms \TSI\ in regret.  The performance of \KMS\ is between \TSD\ and \TSI.  Regarding the batch cost, we observe that \TSI\ performs almost the same as \BSE; the two have the smallest batch costs.  \TSD\ performs similarly to \BAE. These phenomena are not surprising since in both \TSD\ and \BAE\ the batch size increases single exponentially, while in both \TSI\ and \BSE\ the batch sizes are designed essentially the same and increase much faster than single exponential.  The performance of \UCBI\ is always in the middle, and have the same trend as \TSD\ and \BSE. This is because the batch size of \UCBI\ also grows single exponentially but with different constant parameters.  The batch complexity of \KMS\ is much larger than others.

\begin{figure*}[h!t]
	\begin{center}
		\includegraphics[width=.32\textwidth]{./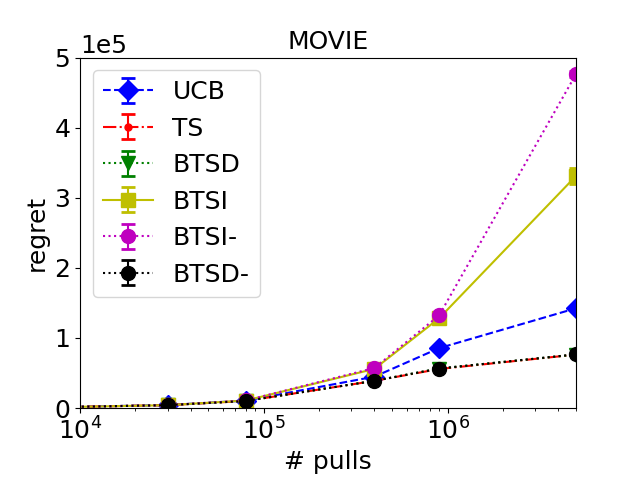}
		\includegraphics[width=.32\textwidth]{./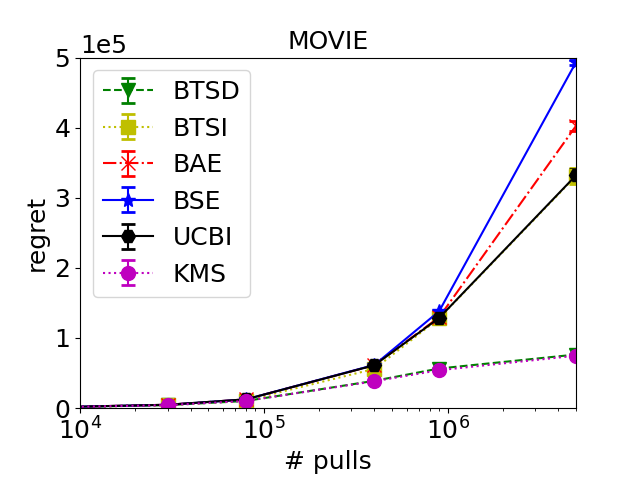}
		\includegraphics[width=.32\textwidth]{./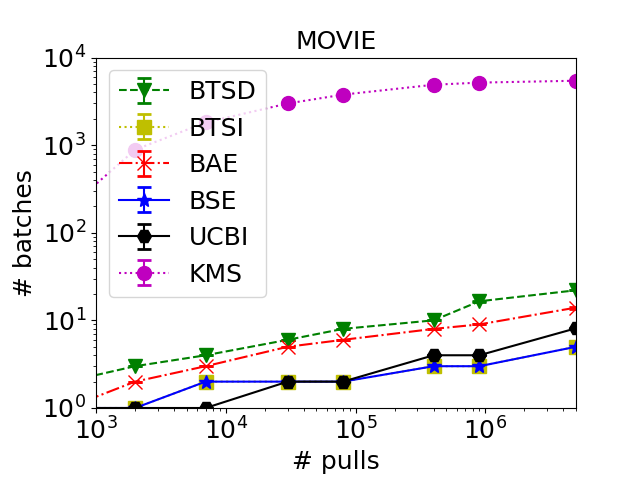}
	\end{center}
	\caption{Regret and \#batch on MOVIE. In the left figure, curves for \TSD\ , \TSDv\ and \TS\ almost overlap; in the middle figure, curves for \TSI\ and \UCBI\ almost overlap, and curves for \TSD\ and \KMS\ almost overlap; in the right figure, curves for \TSI\ and \BSE\ overlap}\label{fig:exp4}
\end{figure*}

Finally, we compare the algorithms on the real world dataset MOVIE.  The results are described in Figure~\ref{fig:exp4}.  In the sequential setting (left subfigure), \TSD\ performs similarly to \TS, and outperforms \UCB\ and \TSI.  Again, \TSDv\ performs similarly to \TSD, and \TSIv\ perform similarly (except for the last point) to \TSI. In the batched setting (middle and right subfigures), \TSD\ and \KMS\ significantly outperforms competitors in regret. But like that on synthetic datasets, the batch cost of \KMS\ is much larger than \TSD\ and other competitors \TSI\ performs similarly to \UCBI\ in regret, and does better than \BSE\ and \BAE.   The batch cost of \TSI\ is still similar to \BSE, and is better than other competitors.

\vspace{1mm}
\noindent{\bf Conclusion.\ \ }  We have observed the followings from our experiments: (1) Our proposed algorithm \TSD\ slightly outperforms \KMS, and significantly outperforms all other state-of-the-art batched algorithms. But \KMS\ has a significant larger batch cost than all competitors. \TSD\ also performs similarly to (or better than) the standard, fully adaptive, Thompson sampling algorithm \TS\ in regret.  (2) \TSI\ and the existing algorithm \BSE\ have the smallest batch cost, but \TSI\ significantly outperforms \BSE\ in regret.

\section{Concluding Remarks}
\label{sec:conclude}

In this paper, we study batched Thompson sampling for regret minimization in stochastic multi-armed bandits. We propose two algorithms and demonstrate experimentally their superior performance compared with state-of-the-art in the batched setting.  We provide rigorous theoretical analysis for the special case when there are two arms, and show that our algorithms achieve almost optimal (up to a logarithmic factor) regret-batches tradeoffs.  Due to some technical challenges, we are not able to prove tight bounds for general $N$, though we believe this is achievable (by properly choosing the parameters $\alpha, \beta$); we leave it as future work.

\balance

\newpage

\bibliographystyle{plain}
\bibliography{paper}

\newpage

\appendix

\end{document}